\documentclass[11pt]{article}
\usepackage[margin=1in]{geometry}
\usepackage{times}
\usepackage[round,authoryear]{natbib}
\setcitestyle{citesep={;},aysep={,},yysep={;}}
\usepackage{microtype}

\usepackage{amsmath,amssymb,amsthm}
\usepackage{graphicx}
\usepackage{booktabs}
\usepackage{array}
\usepackage{xcolor}
\newcommand{\refrev}[1]{#1}
\newcommand{\citrev}[1]{#1}
\usepackage{float}
\usepackage{stfloats}
\fnbelowfloat
\renewcommand{\bottomfraction}{0.5}
\usepackage{hyperref}
\hypersetup{
  colorlinks=true,
  linkcolor=blue!50!black,
  citecolor=blue!50!black,
  urlcolor=blue!50!black,
  pdftitle={What FID Hides: Detecting, Ranking, and Diagnosing Deviations in Generative Evaluation},
  pdfauthor={Hao Chen}
}
\usepackage{url}
\usepackage{placeins}

\newtheorem{proposition}{Proposition}

\title{What FID Hides: Detecting, Ranking, and\\
Diagnosing Deviations in Generative Evaluation}
\author{Hao Chen \\ Department of Statistics \\ University of California, Davis \\ \texttt{hxchen@ucdavis.edu}}
\date{}

\begin{document}
\raggedbottom  
\maketitle

\begin{abstract}
Generative models are commonly ranked by Fr\'echet Inception Distance (FID) and Kernel Inception
Distance (KID), yet FID's first-two-moment summary can miss distributional
differences, and a reported scalar gap alone is not a calibrated test against sampling variation.
FID's moment restriction has concrete consequences: on ImageNet,
visually unrecognizable images optimized only to match the reference Inception mean and covariance
obtain FID $24.7$ versus $58.6$ for held-out real images (lower is better).
Moreover, FID and KID are scalar discrepancies that are unchanged when the two samples are exchanged and therefore do not encode the direction of a
dispersion change: under-dispersion, as can occur in mode collapse, versus over-dispersion.
We introduce \textbf{ZID} (\emph{Z-resolved Integrated Diagnostic}), which combines six standardized
location- and dispersion-sensitive arms from a rank graph (RISE) and Gaussian kernels (GPK at two
bandwidths). Rather than asking one scalar to serve incompatible roles,
ZID reports three linked outputs: an index for ranking departure magnitude,
a permutation $p$-value for testing distributional equality, and a signed dispersion readout for
diagnosis. In controlled experiments, ZID detects a broad range of departures, and its score tracks increasing severity along the corresponding sweeps, including cases in which FID is flat or reversed.
On DiT-XL/2 and SiT-XL/2 guidance sweeps, ZID detects
departure from real data, and its signed readout labels the high-guidance diversity
collapse as under-dispersion.
\end{abstract}

\section{Introduction}\label{sec:intro}

Generative models are increasingly used across science and industry, making
model-selection metrics consequential: a misleading ranking can favor a worse model. Two widely used
embedding-based scalar summaries are FID and KID. FID \citep{heusel2017fid} fits a Gaussian to each
set of embedded samples and reports their Fr\'echet distance; KID \citep{binkowski2018kid} reports the
maximum mean discrepancy associated with a degree-$3$ polynomial kernel on the same embeddings. Both
are commonly used to rank models.

Because FID depends only on the first two moments of the embedding
distribution, \refrev{optimizing a noise-initialized batch of $64{\times}64$ images to match the
Inception feature mean and covariance of an ImageNet reference}
\citep{deng2009imagenet,russakovsky2015imagenet} yields visual noise to which FID assigns a lower, nominally better value than to held-out
real images: $24.7$ versus the $58.6$ finite-sample real--real baseline
(\refrev{Inception-2048}, $n{=}512$;
Fig.~\ref{fig:pixelgaming} and
Sec.~\ref{sec:pixelgaming}), \refrev{demonstrating FID's matched-moment blind spot}. \refrev{By contrast, the ZID score is much larger for the optimized image set than for held-out real images (Fig.~\ref{fig:pixelgaming}).}

The same failure appears directly in feature space. At full-dimensional
Inception-2048 ($m{=}n{=}2500$, with full-rank sample covariances), FID is $19.5$ for real versus
held-out real features but $0.003$ for real versus a moment-restored bimodal batch;
the corresponding KID estimates are $9.5{\times}10^{-5}$ and
$-3.2{\times}10^{-4}$, respectively (App.~\ref{app:feature-gaming}). Related Gaussian
constructions reproduce the FID inversion after PCA reduction, on
\citrev{Tiny-ImageNet}\footnote{Tiny-ImageNet is the 200-class dataset distributed by
\href{http://cs231n.stanford.edu/tiny-imagenet-200.zip}{Stanford CS231n}.}, and in
\citrev{DINO features \citep{caron2021dino}}
(App.~\ref{app:feature-gaming}).

\begin{figure}[!t]\centering
\includegraphics[width=0.4\textwidth]{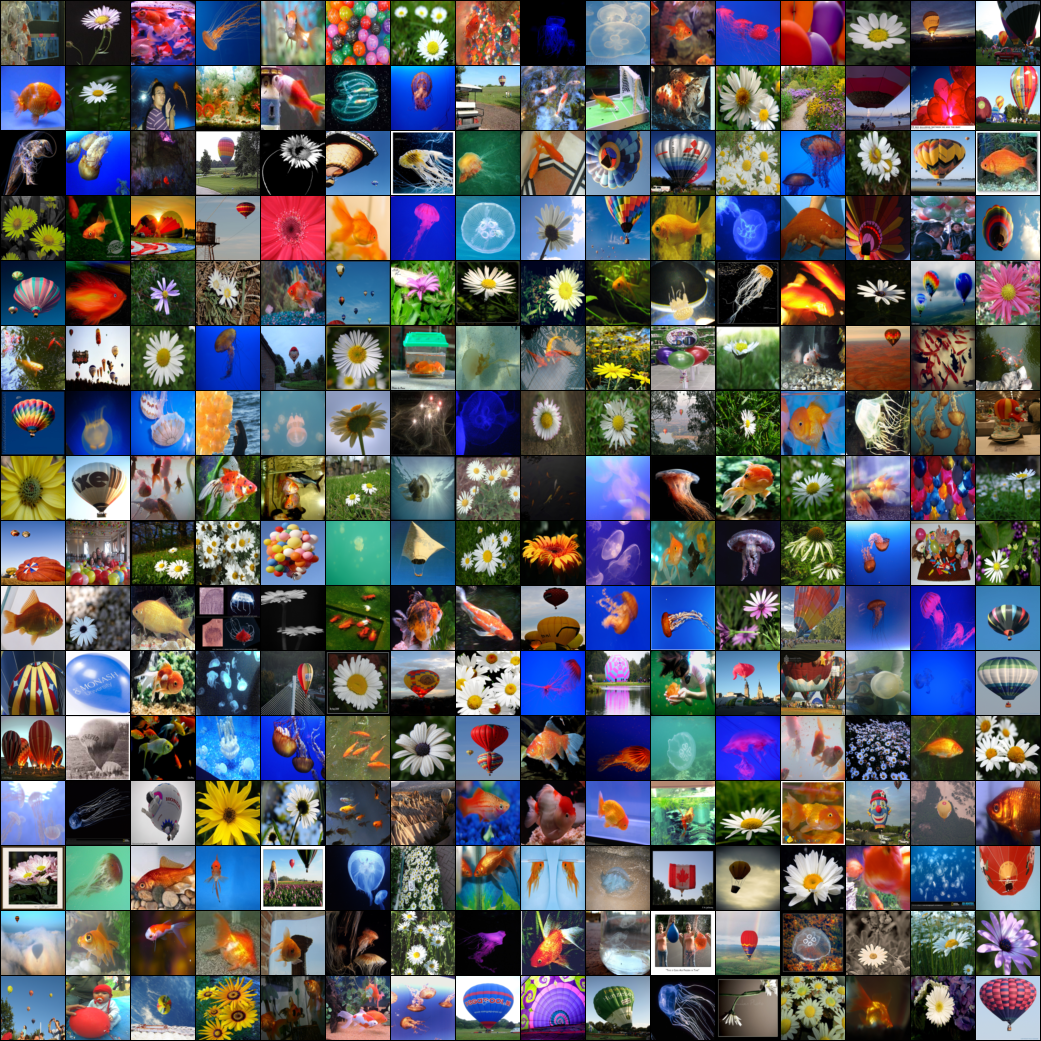}\hspace{11mm}%
\includegraphics[width=0.4\textwidth]{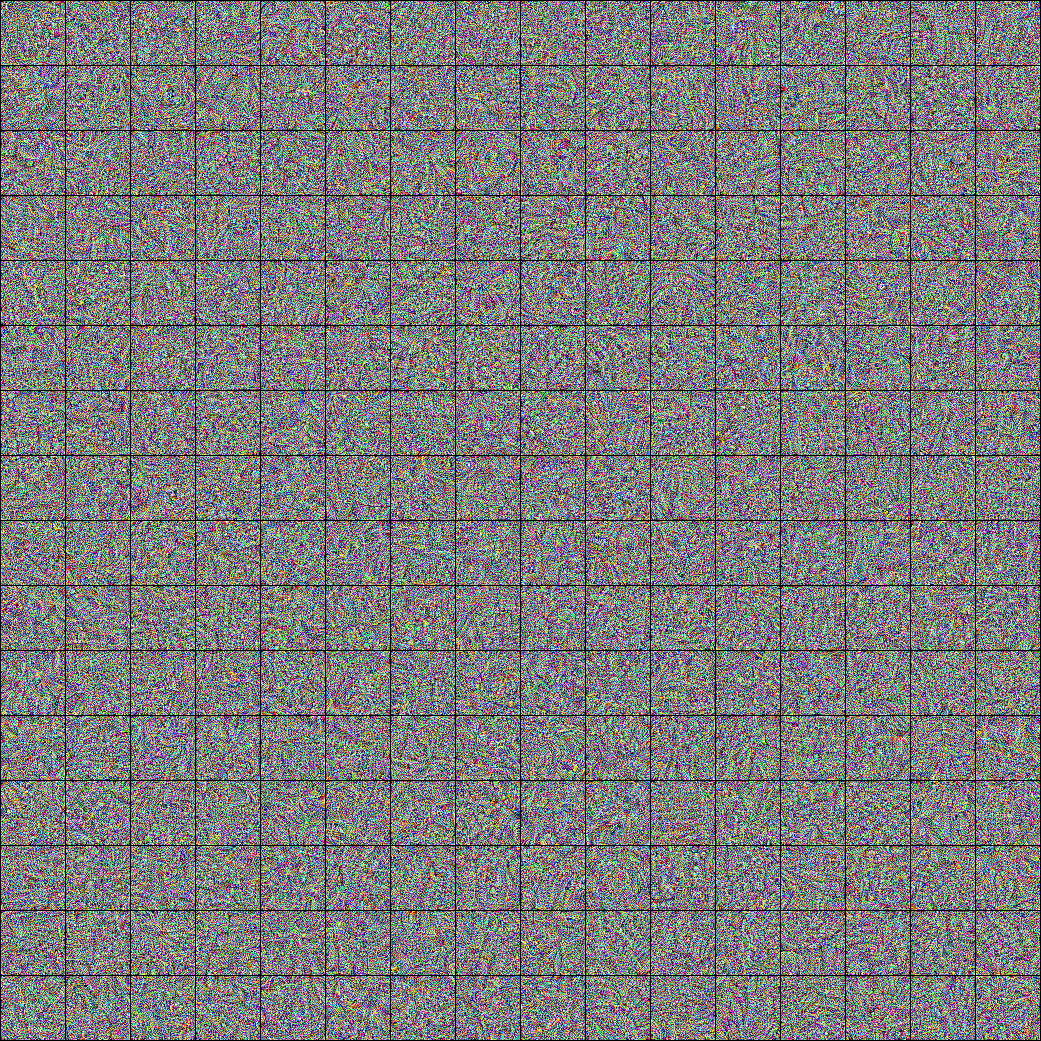}
\caption{\textbf{Pixel-space matched-moment optimization reverses the FID
ranking.} \refrev{The montages show 256 images from each 512-image set; metrics use all 512. Left: held-out
ImageNet images. Right: noise-initialized images optimized only to match the reference Inception mean and covariance.}
Despite being visual noise, the optimized set receives a lower (nominally better) FID ($24.7$ vs.
$58.6$). ZID gives it a larger departure score ($17.2$ vs. $0.86$), and
none of 499 random label permutations produces an equally large score; the held-out comparison has
permutation $p{=}.328$. See Sec.~\ref{sec:pixelgaming}.}
\label{fig:pixelgaming}
\end{figure}

A second limitation appears when the departure has a direction.
We sweep \citrev{classifier-free guidance (CFG) \citep{ho2022classifierfree}} in
\citrev{DiT-XL/2 \citep{peebles2023dit}}, a standard control of the
fidelity--diversity tradeoff. Along this sweep, FID and KID are scalar discrepancies that are unchanged when the two samples are exchanged and
therefore do not distinguish low-guidance off-manifold dispersion from high-guidance mode collapse.
ZID additionally reports a separate signed dispersion readout, which identifies the transition from over- to
under-dispersion (Fig.~\ref{fig:dit-cfg}).

Finite-sample sensitivity provides an additional concern: at $d=100$ and
$n=100$, KID power remains below $.25$ across pure-dispersion shrinkages from
$5\%$ to $25\%$, while FID reaches only $.75$ power for a $5\%$ shrinkage even at $n=1000$
(App.~\ref{app:power}).

ZID (\emph{Z-resolved Integrated Diagnostic}) addresses these issues by
combining complementary graph- and kernel-based statistics, calibrating their aggregated response by
permutation, and retaining a signed dispersion contrast. The aggregated response provides an index
of departure magnitude, and its permutation distribution yields an equality-test $p$-value. The
signed contrast permits an under- or over-dispersion label when the direction is identifiable.
Section~\ref{sec:method} defines the statistics and their
aggregation.

\paragraph{Contributions.}
We argue that generative-model evaluation should support model ranking and
calibrated testing, remain sensitive to distinct forms of distributional departure, and provide directional
diagnostics when the relevant effect is identifiable.
\begin{enumerate}\itemsep2pt
\item \textbf{Matched-moment analysis and calibrated detection.}
\refrev{FID's manipulability is known}
\citrev{\citep{alfarra2022robustness,kynkaanniemi2023role}},
\refrev{and prior work has exhibited distinct distributions with matching first two moments and hence zero FID}
\citrev{\citep{benny2021conditional,luzi2023gaussianmixtures,jayasumana2024cmmd}}.
\refrev{Proposition~\ref{prop:matched-moment} states this non-identifiability for any reference distribution
with nonzero covariance}, and degree-$3$ KID is likewise zero for distributions matched through third order.
We introduce ZID as a calibrated detector for these and other distributional
departures. In a pixel-space stress test optimized to match the reference
Inception mean and covariance, FID falls below the real--real baseline, while ZID assigns the resulting
images a much larger score than held-out real images.
\item \textbf{Detection coverage and directional diagnosis.}
ZID combines rank-based, median-bandwidth, and fine-scale kernel evidence to
cover departures that favor different sensitivities. Across controlled evaluations spanning location,
dispersion, dependence, higher-order structure, multimodality, and off-manifold support, it provides the
broadest overall detection coverage.
Alongside the calibrated equality-test $p$-value, ZID reports separate location-sensitive and
dispersion-sensitive component magnitudes and, when direction is identifiable, labels the departure as under- or
over-dispersed.
\item \textbf{Ranking by severity within departure types.}
Across controlled sweeps of increasing severity, the ZID score provides
consistent ordering across all examined departure types, including settings where FID is flat or reversed.
On DiT-XL/2 and \citrev{SiT-XL/2 \citep{ma2024sit}}, it also tracks departure across the practically relevant guidance range, while
the signed dispersion readout identifies the associated collapse direction.
\end{enumerate}

ZID is therefore intended as a broad-coverage diagnostic when the departure type
is not known in advance and ranking, calibrated testing, and, when identifiable, dispersion-direction
diagnosis are all required.
Our experiments use both Inception and DINO embeddings and include five
pretrained generator families: \citrev{BigGAN \citep{brock2019large}},
\citrev{StyleGAN2-ADA (StyleGAN2: \citealp{karras2020analyzing}; ADA: \citealp{karras2020training})},
\citrev{a CIFAR DDPM \citep{ho2020denoising}}, DiT-XL/2, and SiT-XL/2.

The paper proceeds as follows. Section~\ref{sec:background} reviews related
metrics. Section~\ref{sec:method} defines the ZID score, its outer-permutation equality test, and the
signed readouts. Section~\ref{sec:dominance} evaluates detection, ranking, construction choices, and
robustness under controlled feature-space departures, with comparisons across datasets and feature
representations. Section~\ref{sec:real} studies pretrained generators, and Sec.~\ref{sec:pixelgaming} presents
the pixel-space FID stress test. The appendices provide proofs, experimental protocols, expanded results,
and per-class values.

\section{Background and related metrics}\label{sec:background}
FID compares fitted feature-space Gaussians through their means and
covariances \citep{heusel2017fid}, whereas KID is the MMD associated with a degree-$3$ polynomial
kernel \citep{binkowski2018kid}. Gaussian-kernel MMD and CMMD are kernel two-sample discrepancies
that differ in their kernels or feature representations
\citep{gretton2012kernel,jayasumana2024cmmd}.
\citrev{FID's finite-sample bias motivated the extrapolation-based $\mathrm{FID}_\infty$
correction \citep{chong2020effectively}. Robustness and representation concerns motivated R-FID
\citep{alfarra2022robustness} and related analyses \citep{kynkaanniemi2023role}, while compound FID
aggregates Fr\'echet distances across multiple feature levels \citep{nunn2021compound}. These alternatives
address estimation or representation concerns but retain unsigned Gaussian moment summaries.}
MIND instead uses sliced Wasserstein distances to improve
sample efficiency and robustness to moment matching \citep{berthet2026mind}, while ECS uses
characteristic-function transforms to probe moments and tails \citep{tam2026ecs}.
C2ST provides a learned comparison by testing whether a held-out
classifier can distinguish the two samples \citep{lopezpaz2017revisiting}.

Some metrics instead focus on particular aspects of diversity, coverage,
or dispersion.
\refrev{Vendi Score} summarizes sample diversity \citep{friedman2023vendi};
Precision/Recall separates fidelity from coverage
\citep{sajjadi2018assessing,kynkaanniemi2019improved}; and Density/Coverage refines their
neighborhood-based estimates \citep{naeem2020reliable}.
PERMDISP tests homogeneity of multivariate dispersion through distances
to group centroids \citep{anderson2006dispersion}.

\section{The ZID diagnostic: score, test, and signed readout}\label{sec:method}
\subsection{Candidate families and a common two-coordinate structure}\label{sec:two-coordinate}
We begin with three candidate tests for the ZID construction: the generalized
permutation-based kernel test (GPK) \citep{song2024generalized}, the
rank-in-similarity-graph edge-count test (RISE) \citep{zhou2023new}, and the generalized
edge-count test (GET) \citep{chen2017generalized}.
Although they use different pairwise representations, all three produce
the same form of two-coordinate summary of within-sample similarity.
Let~$X=\{x_i\}_{i=1}^m$ denote the reference sample,
$Y=\{y_i\}_{i=1}^n$ the generated sample, and $N=m+n$ the~pooled sample size.
For any of these tests, let $w_{ij}$ denote its pairwise weight on
the pooled observations, and define
\[
U_x=\sum_{i<j}w_{ij}\mathbf 1\{i,j\in X\},\qquad
U_y=\sum_{i<j}w_{ij}\mathbf 1\{i,j\in Y\}.
\]
GPK uses dense Gaussian-kernel weights, GET a sparse binary similarity graph, and RISE a
rank-weighted similarity graph. Thus each test yields $U=(U_x,U_y)^\top$ while encoding similarity
differently. Let $\mu$ and $\Sigma$ be the mean and covariance of $U$
under the label-permutation null; throughout, the subscript $0$ denotes expectation, variance, or
covariance under this null. Each test measures the joint displacement of the two
within-sample quantities with the Mahalanobis statistic
\[
T=(U-\mu)^\top\Sigma^{-1}(U-\mu).
\]
This two-dimensional statistic admits an established decomposition. Define
\[
 W=\frac{U_x}{m-1}+\frac{U_y}{n-1},\qquad
 D=U_x-U_y.
\]
For GET, $W$ is proportional to the weighted edge-count statistic; for RISE, it is the
corresponding weighted combination of the two within-sample rank totals. For GPK, the centered
coordinate $W-\mathbb E_0W$ is proportional to the unbiased empirical squared MMD
(Sec.~\ref{sec:signed-imbalance}). Define the standardized forms
\[
 Z_W=\frac{W-\mathbb E_0W}{\sqrt{\mathrm{Var}_0(W)}},
 \qquad
 Z_D=\frac{D-\mathbb E_0D}{\sqrt{\mathrm{Var}_0(D)}}.
\]
The two standardized coordinates are uncorrelated under the permutation null and satisfy
\[
T=Z_W^2+Z_D^2,\qquad \mathrm{Cov}_0(Z_W,Z_D)=0.
\]

Because both weights in $W$ are positive, $Z_W$ records aligned displacement of the two
within-sample quantities, whereas $Z_D$ records their opposing displacement. Interchanging $X$
and $Y$ leaves $T$ unchanged and reverses the sign of $Z_D$.
Pure location shifts tend to displace $U_x$ and $U_y$ in the same direction relative to their permutation-null expectations
and therefore respond primarily through $Z_W$. Scale-dominated alternatives can instead produce signal primarily in the
signed $Z_D$ coordinate, particularly as dimension grows. At low dimension, however, scale
alternatives can also produce appreciable aligned displacement in $U_x$ and $U_y$, yielding signal in
$Z_W$. Departures combining location and scale
changes may produce signal in one or both coordinates.

\subsection{Connections and contrasts with MMD, FID, and KID}\label{sec:signed-imbalance}
Set $w_{ij}=k(z_i,z_j)$ and define
$U_{xy}=\sum_{i\in X,j\in Y}w_{ij}$. Let
$\bar k=\{N(N-1)\}^{-1}\sum_{i\ne j}k(z_i,z_j)$ be the fixed pooled average kernel weight.
Because $\mathbb E_0(W)=N\bar k/2$, the usual unbiased empirical squared MMD satisfies
\[
\widehat{\mathrm{MMD}}_u^2
=\frac{2U_x}{m(m-1)}+\frac{2U_y}{n(n-1)}-\frac{2U_{xy}}{mn}
=\frac{2(N-1)}{mn}\bigl(W-\mathbb E_0W\bigr).
\]
Thus, $\widehat{\mathrm{MMD}}_u^2$ and $W-\mathbb E_0W$ differ only by
the positive factor $2(N-1)/(mn)$.
MMD is symmetric under
swapping $X$ and $Y$ and supplies no signed-difference coordinate corresponding to $Z_D$.
KID uses the same MMD statistic with a degree-three polynomial kernel.
$Z_D$ instead contrasts $U_x$ with $U_y$; its sign supplies the orientation
used by the under- versus over-dispersion readout (Figs.~\ref{fig:schematic} and~\ref{fig:signed}).
The opposite-direction scale pattern is the mechanism exploited by
generalized graph and kernel tests \citep{chen2017generalized,song2024generalized}.
Under the scale alternatives, increasing dimension can concentrate pairwise similarities and place the two within-sample
similarities on opposite sides of their pooled permutation
reference. When these centered shifts have opposite signs, their contributions
largely cancel in $W-\mathbb E_0W$ but add in $D-\mathbb E_0D$. The same pattern can occur when
location and scale both change: opposing centered shifts can contribute to $Z_D$, while
aligned shifts simultaneously produce signal in $Z_W$.
App.~\ref{app:power} shows empirically how the scale response becomes
increasingly concentrated in $Z_D$ as dimension grows.

\begin{figure}[!t]\centering
\includegraphics[width=0.96\textwidth]{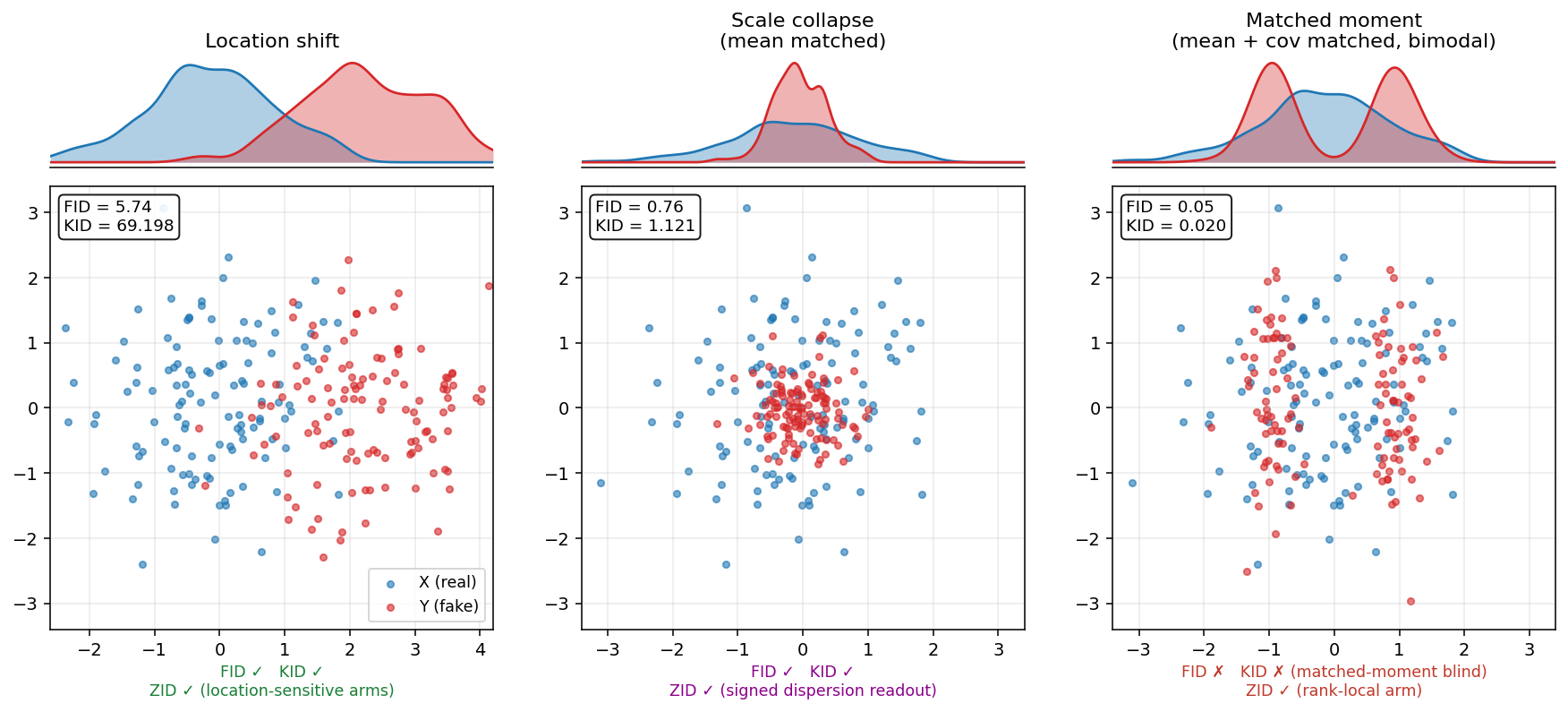}
\caption{\textbf{Controlled differences and diagnostic readouts.}
Two-dimensional schematics (real in blue, generated in red); FID and KID
are computed from the displayed samples. All three procedures respond to a location shift.
Under mean-matched scale collapse, the three scalar scores do not encode direction;
ZID's separate signed dispersion readout identifies the change as under-dispersion.
For the symmetric matched-moment construction, moments through total degree three also agree, so population FID and degree-$3$ KID are both zero.
ZID is not constrained to zero by either moment identity. High-dimensional finite-sample power for dispersion changes is quantified in
App.~\ref{app:power}.}
\label{fig:schematic}
\end{figure}

FID's dependence on only the feature mean and covariance creates a
matched-moment blind spot: its population value is exactly zero whenever those moments match, even if
the two feature distributions differ.
\refrev{The following elementary proposition records this known non-identifiability in a general form.}

\begin{proposition}[Blindness of two-moment discrepancies]
\label{prop:matched-moment}
Let $M(P,Q)$ be a discrepancy determined only by the means and covariances
of $P$ and $Q$, with $M(P,Q)=0$ whenever these moments agree. For every distribution $P$ on
$\mathbb R^d$ with finite second moments and nonzero covariance, there exists $Q\neq P$ with the same
mean and covariance. Hence $M(P,Q)=0$; in particular, $\operatorname{FID}(P,Q)=0$.
\end{proposition}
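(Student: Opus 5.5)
The plan is to construct $Q$ explicitly from $P$ and then check that $Q\neq P$. Let $\mu$ and $\Sigma$ be the mean and covariance of $P$, with $\Sigma\neq 0$.

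\emph{Case 1: $P$ is not Gaussian.} Take $Q=\mathcal N(\mu,\Sigma)$. It has the same first two moments and differs from $P$ by assumption.

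\emph{Case 2: $P=\mathcal N(\mu,\Sigma)$.} Here I will produce a non-Gaussian $Q$ with the same mean and covariance. Since $\Sigma\neq0$, it has a unit eigenvector $v$ with eigenvalue $\lambda>0$. Write a Gaussian vector as $\mu+\xi v+R$, where $\xi\sim\mathcal N(0,\lambda)$ and $R$ is an independent Gaussian supported on $v^\perp$. Define $Q$ as the law of $\mu+\eta v+R$, where $\eta$ is independent of $R$ and takes the values $\pm\sqrt\lambda$ with probability $1/2$ each. Then $\mathbb E\eta=0$ and $\operatorname{Var}\eta=\lambda=\operatorname{Var}\xi$. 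Because $\eta$ and $R$ are independent, the cross-covariance between the $v$-direction and $v^\perp$ is zero in both $P$ and $Q$. Hence $Q$ has mean $\mu$ and covariance $\Sigma$. Moreover $Q\neq P$: the projection $\langle v,\cdot-\mu\rangle$ is two-point under $Q$ but Gaussian under $P$.

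\emph{Alternative construction (covers both cases at once).} Take $Q$ to be the law of $\mu-(X-\mu)$ for $X\sim P$. This preserves mean and covariance, but it fails when $P$ is symmetric about $\mu$, so one still needs a fallback. The case split above is the cleaner route.

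The conclusion then follows directly. $M(P,Q)=0$ by the hypothesis on $M$. FID is the Fréchet distance between $\mathcal N(\mu_P,\Sigma_P)$ and $\mathcal N(\mu_Q,\Sigma_Q)$, so it depends only on those moments and vanishes when they agree: $\|\mu-\mu\|^2+\operatorname{tr}(\Sigma+\Sigma-2\Sigma)=0$, using $(\Sigma^{1/2}\Sigma\Sigma^{1/2})^{1/2}=\Sigma$ for positive semidefinite $\Sigma$.

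I expect the main obstacle to be only the Gaussian case, where one must verify that the perturbed law keeps the full covariance, including the cross terms. Choosing $v$ as an eigenvector makes the $v$-component independent of the $v^\perp$-component under a Gaussian law, and this is what makes the cross covariances vanish. The nonzero-covariance hypothesis is used exactly there. If $\Sigma=0$, then $P=\delta_\mu$ and any $Q$ with the same moments equals $P$, so the statement would be false without that hypothesis.
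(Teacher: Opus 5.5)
Your proof is correct and follows essentially the same route as the paper. When $P$ is not Gaussian, both take $Q=\mathcal N(\mu,\Sigma)$; when $P$ is that Gaussian, both replace Gaussian coordinates along eigendirections of $\Sigma$ by scaled Rademacher signs. The only difference is how many directions are replaced: the paper replaces all $r\ge1$ nonzero eigendirections at once, so $Q$ is finitely supported and $Q\neq P$ is immediate, whereas you replace a single eigendirection $v$ and keep the Gaussian remainder on $v^\perp$, which you correctly justify through the independence of $\langle v,X-\mu\rangle$ and the orthogonal component.
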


\noindent(Proof in Appendix~\ref{app:proof}: choose either the Gaussian with the same moments or,
when $P$ is already that Gaussian, an explicit non-Gaussian distribution with the same moments.)

KID uses a degree-$3$ polynomial kernel, so Prop.~\ref{prop:matched-moment} does not cover it directly.
If two distributions match all mixed moments of total degree at most three,
their population polynomial-kernel MMD, and hence population KID, is zero.

To examine finite-sample behavior under first-two-moment matching in real feature space,
we use a bimodal alternative whose mean and covariance match those of real CIFAR--Inception features. As
$m=n$ increases from $2$k to $25$k, empirical FID remains at its real--real baseline: the alternative/real--real FID values are
$24.5/24.3$ at $n=2$k and $2.00/1.99$ at $n=25$k. At $n=2$k, the ZID score defined below has median $84.3$
for the alternative and $.32$ for the corresponding real--real comparison; its permutation test
rejects in all 20 repetitions at the minimum attainable $p=.01$ with 99 permutations
(App.~\ref{app:feature-gaming}).

\subsection{Score construction and design choices}\label{sec:score-design}
ZID retains three complementary similarity representations: rank-based
RISE, GPK-med, and GPK-small.
RISE and GET both use similarity graphs. ZID retains RISE because its rank weights preserve graded
similarity information, whereas GET uses binary edge weights;
\citet{zhou2023new} report higher power for this weighted construction in most of their studied settings.
GET is not retained in the six-arm construction and is evaluated separately in Sec.~\ref{sec:dominance}.
GPK-med uses a Gaussian-kernel bandwidth equal to the pooled median
pairwise distance, denoted $\sigma_{\rm med}$. GPK-small uses
$0.175\sigma_{\rm med}$ in the reported six-arm ZID construction.
Sensitivity to nearby bandwidths is reported in App.~\ref{app:frozen-design-controls}.
For each representation, ZID retains both $Z_W^{(j)}$ and $Z_D^{(j)}$, giving six arms.

\paragraph{The ZID score: an omnibus departure index.}
Each of the six arms
$\{Z_W^{(j)},Z_D^{(j)}\}_{j=1}^3$ is centered and scaled to have a unit-variance permutation null; this
standardization is not, by itself, a normality claim. For arm $k$, define the two-sided
normal-reference tail $r_k=2\bar\Phi(|Z_k|)$, sort these as
$r_{(1)}\le\cdots\le r_{(6)}$, and set
\[
q_{\mathrm{rank}}=\min\!\left\{1,\min_{i=1,\ldots,6}\frac{6r_{(i)}}{i}\right\},
\qquad S_{\mathrm{ZID}}=-\log q_{\mathrm{rank}} .
\]
The flat-Simes form \citep{simes1986improved} makes the score coordinatewise monotone in the observed arm magnitudes and
unbounded. Larger values indicate a more extreme standardized departure
from the reference under the fixed evaluation protocol. Because the arms are
dependent and the normal-reference tails are reference mappings rather than finite-sample arm $p$-values,
$r_k$ and $q_{\mathrm{rank}}$ serve as reference-scale ranking quantities; the outer permutation in
Sec.~\ref{sec:permutation-readouts} supplies test calibration. Score comparisons use a common sample
size, embedding, reference, and preprocessing because an alternative's standardized departure can
vary with $n$.

Because different departures can concentrate signal in different arms, we use flat-Simes to preserve a strong
signal in any one arm without allowing another arm to cancel it.
The map $-\log\{2\bar\Phi(|Z|)\}$ puts unit-variance arms on a common, unbounded reference scale,
assigns zero severity at $Z=0$, and satisfies
$-\log\{2\bar\Phi(|Z|)\}=Z^2/2+\log|Z|+O(1)$ as $|Z|\to\infty$. Flat-Simes is
symmetric in the arms, coordinatewise monotone, and sparse-adaptive: one large arm can drive the score,
while several moderate arms reduce its multiplicity penalty. In particular, writing
$s_k=-\log\{2\bar\Phi(|Z_k|)\}$ and $s_{\max}=\max_k s_k$,
\begin{equation}
s_{\max}-\log 6\;\le\;S_{\mathrm{ZID}}\;\le\;s_{\max},
\end{equation}
and if all six severities are equal to $s$, then $S_{\mathrm{ZID}}=s$. Thus no arm can be hidden by
cancellation. Appendix~\ref{app:frozen-design-controls} compares flat Simes
with max and sum-of-squares aggregation, \refrev{Fisher's combination rule
\citep{fisher1932statistical}}, and the \refrev{Cauchy combination rule
\citep{liu2020cauchy}}. Flat Simes is among
the leading rules for both detection and ranking across increasing severity.

\subsection{Permutation test and diagnostic readouts}\label{sec:permutation-readouts}
For a pooled sample, we recompute all six arms and $S_{\mathrm{ZID}}$ after each of $B$ label
permutations.
The reported Monte Carlo permutation tail is denoted
$\widehat p_{\mathrm{ZID}}$:
\[
\widehat p_{\mathrm{ZID}}=\frac{1+\sum_{b=1}^{B}\mathbf 1\{S_{\mathrm{ZID}}^{(b)}
\ge S_{\mathrm{ZID}}^{\mathrm{obs}}\}}{B+1}.
\]
At nominal level $\alpha$, the test rejects when
$\widehat p_{\mathrm{ZID}}\le\alpha$. The \refrev{plus-one Monte Carlo permutation $p$-value
\citep{phipson2010permutation}} includes the
observed labeling alongside the $B$ random relabelings, giving $B+1$ exchangeable configurations.
Recomputing the entire construction under each relabeling preserves
its internal dependence and yields a finite-sample-valid Monte Carlo permutation $p$-value without
independence or Gaussian assumptions on the arms.
The test asks whether the embedded distributions are distinguishable at
the available $n$. ``ZID test'' and ``ZID score'' always denote the six-arm implementation;
member-specific analyses are labeled RISE or GPK.

\paragraph{Component readouts.}
Applying flat-Simes separately to
$\{Z_W^{(j)}\}_{j=1}^3$ and $\{Z_D^{(j)}\}_{j=1}^3$ gives nonnegative aggregated $W$- and $D$-component magnitudes,
denoted $S_W$ and $S_D$, respectively. $S_W$ summarizes the three $W$-coordinate responses; $S_D$ summarizes the three $D$-coordinate responses and supplies the magnitude used by the dispersion diagnosis below. \refrev{Let $\widehat p_W$ and $\widehat p_D$ denote the permutation tails of $S_W$ and $S_D$, respectively.} These component tails describe where the six-arm signal appears; only $\widehat p_{\mathrm{ZID}}$, the permutation tail of the complete six-arm score, determines whether the test rejects distributional equality, and neither magnitude is an additive part of that score.
The dispersion diagnosis applies the coherence rule below to $S_D$ and the signed member-level $D$ coordinates. In reported numerical
results, $S_W$ and $S_D$ denote the aggregated component magnitudes, whereas
$Z_W^{(j)}$ and $Z_D^{(j)}$ retain their signed member-level meanings; RISE,
GPK-med, and GPK-small superscripts identify the member arms.
For continuous visualization, the ungated net signed dispersion score is
$\operatorname{sign}(\sum_j Z_D^{(j)})S_D$. The formal diagnosis below combines the
$D$-component permutation tail with the reference-active member signs. The resulting diagnostic
readout is denoted
\[
D_{\mathrm{ZID}}=
\begin{cases}
-S_D, & \text{under-dispersion},\\
+S_D, & \text{over-dispersion},
\end{cases}
\]
and is undefined when the diagnosis is \emph{member-sign conflict}, \emph{ambiguous}, or
\emph{not assigned}. Thus $Z_D^{(j)}$ always denotes a standardized member coordinate, whereas
$D_{\mathrm{ZID}}$ denotes the coherence-qualified aggregate diagnostic.

\paragraph{Dispersion diagnosis.}
\refrev{Using the $D$-component tail $\widehat p_D$ of the flat-Simes magnitude formed
from the three $D$ arms $\{Z_D^{(j)}\}_{j=1}^3$,} define the descriptive reference-active set
$\mathcal A=\{j:2\bar\Phi(|Z_D^{(j)}|)\le \alpha\}$. Then
\[
\operatorname{diagnosis}=
\begin{cases}
\textit{under-dispersion}, & \widehat p_D\le \alpha,\ \mathcal A\ne\varnothing,\ Z_D^{(j)}<0\ \forall j\in\mathcal A,\\
\textit{over-dispersion}, & \widehat p_D\le \alpha,\ \mathcal A\ne\varnothing,\ Z_D^{(j)}>0\ \forall j\in\mathcal A,\\
\textit{member-sign conflict}, &
\widehat p_D\le \alpha\ \text{and signs within }\mathcal A\text{ conflict},\\
\textit{ambiguous}, &
\widehat p_D\le \alpha\ \text{and }\mathcal A=\varnothing,\\
\textit{not assigned}, & \widehat p_D>\alpha.
\end{cases}
\]
The six-arm $\widehat p_{\mathrm{ZID}}$ alone determines rejection of distributional equality;
$\widehat p_D$ is a prespecified diagnostic gate rather than an additional separately error-controlled test. An
under- or over-dispersion label, and hence $D_{\mathrm{ZID}}$, is reported only when $\widehat p_D\le\alpha$
and the reference-active member signs are coherent.
The same $\alpha$ threshold defining $\mathcal A$ filters signs by their normal-reference magnitude; it is not a
separate significance test. The labels \emph{under-dispersion} and
\emph{over-dispersion} describe the sign of the within-sample-similarity imbalance. In the controlled
scale, truncation, and guidance families studied here, this sign tracks diversity contraction or
expansion. Controlled scale changes, DiT, and SiT support this
interpretation. When reference-active signs conflict, the diagnosis is
\emph{member-sign conflict}, and the member signs are reported rather than collapsed into a single
direction. The label \emph{ambiguous} is reserved for a passed diagnostic gate with no
reference-active member from which to read a direction.

\section{Feature-space evaluation under controlled departures}\label{sec:dominance}
Using controlled transformations of real image features, we compare ZID
with its comparators across eight distributional departures and examine the directional diagnosis
supplied by ZID. We then evaluate ranking across sweeps of increasing severity, analyze the ZID
construction, vary severity, dimension, and sample size, and compare responses across datasets and
feature representations.

\subsection{Detection and directional diagnosis}
Each repetition draws two disjoint samples from a fixed pool of Inception
features extracted from real CIFAR-10 images \citep{krizhevsky2009learning} and reduced to $d=128$; a controlled transformation is
applied to the second sample. The common setting uses $m=n=200$, 300 independent repetitions, and
$\alpha=.05$. Permutation-calibrated methods use 499 label relabelings, and rejection follows the
$p\le\alpha$ convention defined in Sec.~\ref{sec:permutation-readouts}. We examine eight departure
families spanning location, dispersion, linear and nonlinear dependence, skewness, kurtosis,
matched-moment multimodality, and off-manifold support. Each transformation emphasizes one
distributional characteristic, whereas a real generator may combine several.
Preliminary multi-method pilot runs selected the displayed strength for each
family so that the maximum estimated power over the panel was high but below ceiling; the chosen signal
was then held fixed across all methods in the reported comparison.
The resulting signal strengths are specific to their departure families and
are not numerically comparable across columns.
\refrev{Appendix~\ref{app:controlled-departures} summarizes the controlled transformations;}
Sec.~\ref{sec:robustness} varies severity, dimension, and sample size.

\begin{figure}[!t]\centering
\includegraphics[width=\linewidth]{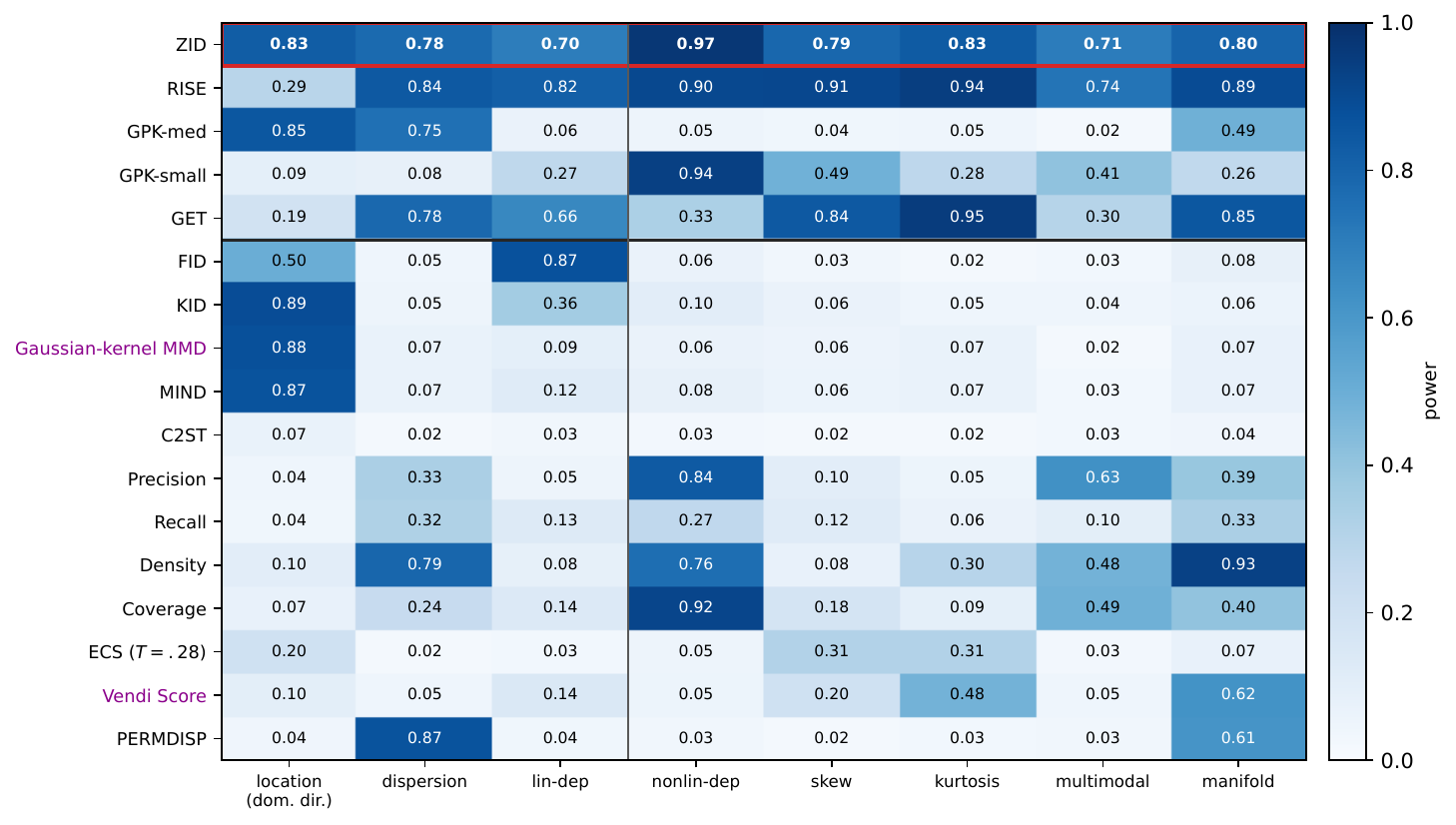}
\caption{\textbf{Detection power across eight controlled distributional departures.}
Cells report empirical power at the $0.05$ significance level using real CIFAR-Inception features reduced to
$d=128$, with $n=200$ observations per sample and 300 independent repetitions.
Permutation-calibrated tests use 499 relabelings.
The boxed row
is the complete six-arm ZID test. The next four rows report standalone RISE, GPK-med, GPK-small,
and GET tests; the remaining \refrev{twelve} are external comparators. The eight probes span location, dispersion, dependence, higher-order structure, multimodality,
and off-manifold support.
}
\label{fig:dom}
\end{figure}

Fig.~\ref{fig:dom} compares ZID with standalone RISE, GPK-med, GPK-small, and GET tests
and \refrev{twelve} external comparators. The general-purpose group
comprises FID, KID, \refrev{Gaussian-kernel MMD}, MIND, ECS, and C2ST; the specialist diagnostics are
Precision, Recall, Density, Coverage, \refrev{Vendi Score}, and PERMDISP
\citep{kynkaanniemi2019improved,naeem2020reliable,friedman2023vendi}.
Collectively, this panel draws on moments, kernels, projections,
classifiers, nearest-neighbor geometry, support estimates, diversity summaries, and dispersion tests.
ECS uses frequency parameter $T$, selected in the independent balancing study
documented in App.~\ref{app:ecs-sensitivity}. These four standalone tests use
their established calibrations: RISE and GET compare $Z_W^2+Z_D^2$ with their asymptotic
$\chi^2_2$ references, whereas GPK-med and GPK-small calibrate their quadratic statistics by label
permutation. The remaining external scalar and geometric comparators use sample-label permutation
calibrations of their displayed statistics. For the full six-arm ZID, the complete score is recomputed under every relabeling and
calibrated by its own outer permutation tail. \refrev{C2ST uses a stratified training--test split and a
fixed one-hidden-layer MLP; its held-out accuracy is calibrated by balanced test-label relabelings.}
\refrev{Appendix~\ref{app:c2st-protocol} summarizes the controlled-departure and comparator protocols.}

Relative to the \refrev{twelve}
external comparators, ZID has the highest power
on nonlinear dependence ($.97$), skewness ($.79$), kurtosis ($.83$), and matched-moment
multimodality ($.71$). KID leads location ($.89$), PERMDISP dispersion ($.87$), FID linear
dependence ($.87$), and Density off-manifold support ($.93$). ZID is the only row in this comparison
with power of at least $.70$ in every column; among the external comparators, the largest row minimum
is Density's $.08$.

Among these standalone tests, RISE is sensitive to dependence, higher-order structure, multimodality, and off-manifold support
but is weak on location ($.29$); GPK-med supplies location and dispersion
($.85/.75$); and GPK-small is strongest on nonlinear dependence ($.94$)
but weak on location and
dispersion. GET is broadly sensitive but weak on location, nonlinear dependence, and multimodality.
The six-arm aggregation combines these departure-specific
strengths into the breadth summarized above. Section~\ref{sec:design-sensitivity} tests this
complementarity by removing both arms of RISE, GPK-med, or GPK-small in turn and by analyzing severity ordering.

\paragraph{Null calibration on real features.}\label{sec:calibration}
To verify calibration at the representation and sample size used in
Fig.~\ref{fig:dom}, we compare two disjoint samples from the same real CIFAR--Inception PCA-128 pool
($m=n=200$) over 500 repetitions with 999 outer permutations, giving finer Monte Carlo
$p$-value resolution at this setting. The resulting rejection rates are
$.048$ at $\alpha=.05$ (Wilson 95\% CI $[.032,.070]$) and $.010$ at $\alpha=.01$.

\paragraph{Directional diagnosis.}
Detection power does not assess whether a metric identifies the direction
of a scale change. We therefore apply symmetric contraction and expansion to full-dimensional
Inception features and compare the resulting magnitudes and signed readout (Fig.~\ref{fig:signed}).
In this full-rank dispersion sweep, the ungated net signed dispersion score changes sign across
the reference scale. The formal readout $D_{\mathrm{ZID}}$ is negative for the displayed contractions
and positive for the displayed expansions; at $c{=}1$, it is undefined because no direction is assigned. FID
remains unsigned and can assign similar magnitudes to deviations on opposite sides of the
reference.

\begin{figure}[!t]\centering
\includegraphics[width=0.50\linewidth]{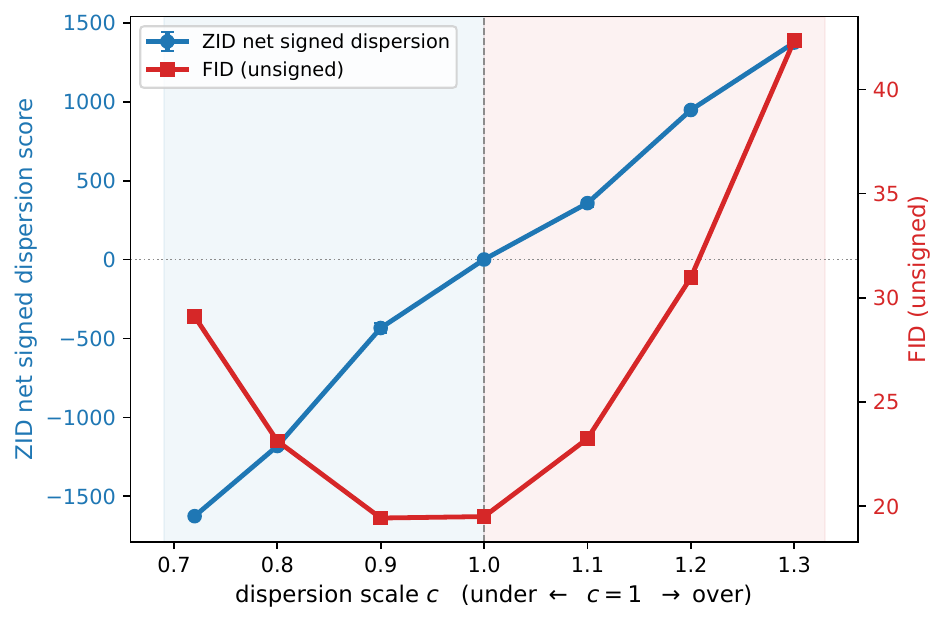}
\caption{\textbf{Signed vs.\ unsigned dispersion}
(real raw Inception-$2048$, disjoint $m{=}n{=}2500{>}d$, full-rank covariance;
$Y{=}\mu{+}c(\mathrm{base}{-}\mu)$). The blue curve shows the ungated net signed dispersion score,
$\operatorname{sign}(\sum_j Z_D^{(j)})S_D$. The formal readout $D_{\mathrm{ZID}}$ equals this score when the diagnosis resolves
to under- or over-dispersion; at $c=1$, the diagnosis is not assigned and $D_{\mathrm{ZID}}$ is undefined
(Sec.~\ref{sec:permutation-readouts}).
FID is unsigned and is nearly equal at $c{=}0.8$ and $1.1$.
FID's displayed minimum, $19.43$ at $c{=}0.9$ versus $19.50$ at $c{=}1$, slightly favors mild
collapse in this finite sample. Points are means over eight independent repetitions.}
\label{fig:signed}
\end{figure}

\subsection{Ranking across increasing severity}\label{sec:ranking}
Detection and ranking address different questions.
Figure~\ref{fig:dom} compares rejection power at one level of each departure. The ranking experiments use
CIFAR--Inception PCA-128 features with $m=n=200$. For each controlled departure family, the full sweep
evaluates six ordered levels, beginning with equality and increasing in severity, with 100 independently generated
reference--comparison pairs per level; Spearman $\rho$ is computed between the score and the
level index across the resulting 600 comparisons (Fig.~\ref{fig:rhobars}). A separate endpoint experiment
generates 100 independent mild-versus-final comparison pairs per family and reports the fraction for
which the final-level score is at least as large as the mild-level score (Table~\ref{tab:ranking}).
The transformations and complete sweeps are specified in App.~\ref{app:controlled-departures}.

\begin{figure}[!t]\centering
\includegraphics[width=\linewidth]{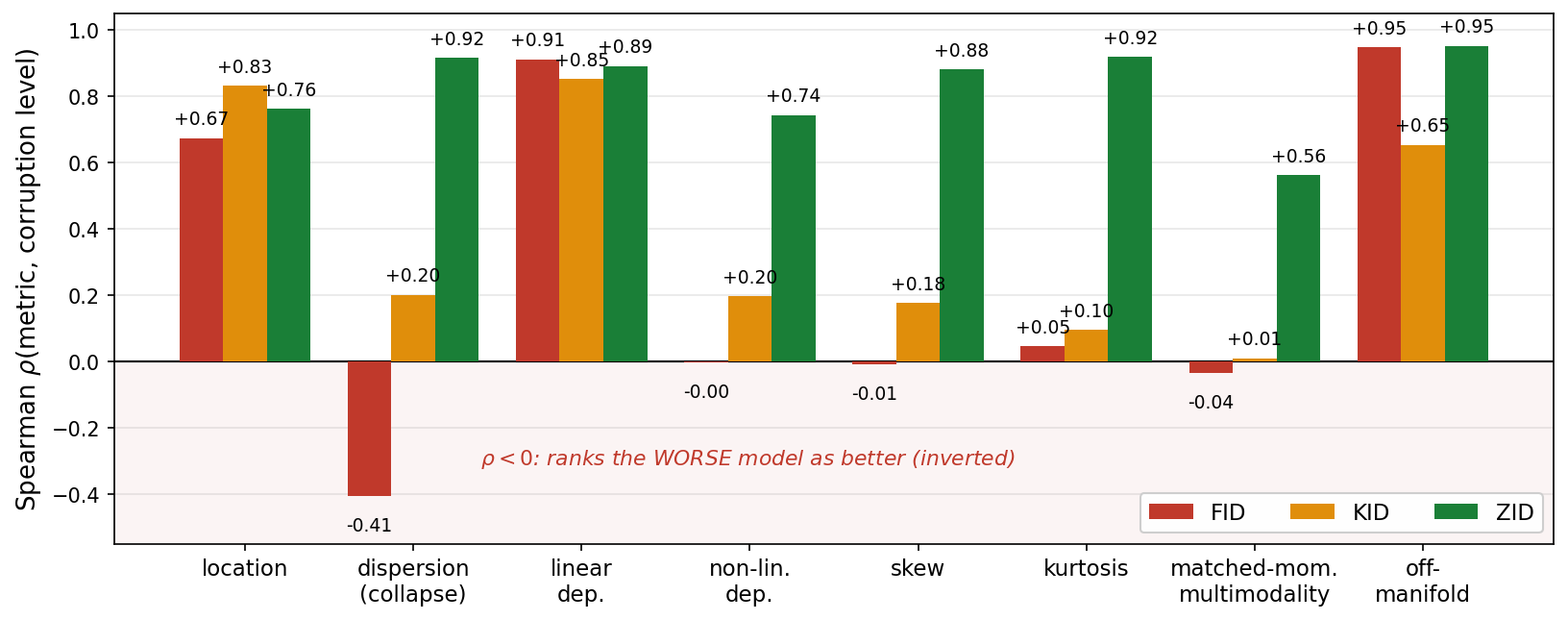}
\caption{\textbf{Ranking across sweeps of increasing severity.} Spearman $\rho$
between each score and the six ordered levels of each controlled departure family. Positive values
indicate that the score tends to increase as the departure becomes more severe; negative values indicate
that it tends to assign smaller scores to more severe samples. Within each departure, all methods use the
same embedding, sample size, and preprocessing.}
\label{fig:rhobars}
\end{figure}

Figure~\ref{fig:rhobars} shows that FID and KID track increasing severity on
location, linear dependence, and off-manifold support. FID instead reverses the dispersion-collapse
ordering ($\rho=-.41$) and is
essentially flat ($|\rho|\le .05$) on the four moment-restored departures: nonlinear dependence,
skewness, kurtosis, and matched-moment multimodality. KID remains weak on dispersion collapse and the
four moment-restored sweeps ($\rho\le .20$). The ZID score maintains positive rank association on every
sweep, ranging from $.56$ on
matched-moment multimodality to $.95$ on off-manifold support.

\begin{table}[!t]\centering\footnotesize
\caption{\textbf{Pairwise ordering of graded departures.}
Accuracy is the percentage of 100 independent mild-versus-final comparisons
ordered correctly; chance accuracy is $50\%$.
The real CIFAR-Inception protocol is $d{=}128,n{=}200$; dispersion is diversity
collapse. Bold denotes the row best.}
\label{tab:ranking}
\begin{tabular}{lccc}
\toprule
 & FID & KID & ZID \\
\midrule
location & 98 & \textbf{100} & 99 \\
dispersion (collapse) & 27 & 64 & \textbf{100} \\
linear dependence & \textbf{100} & \textbf{100} & \textbf{100} \\
nonlinear dependence & 45 & 61 & \textbf{92} \\
skewness & 50 & 61 & \textbf{100} \\
kurtosis & 52 & 48 & \textbf{100} \\
matched-mom.\ multimodality & 55 & 59 & \textbf{83} \\
off-manifold support & \textbf{100} & \textbf{100} & \textbf{100} \\
\bottomrule
\end{tabular}
\end{table}

Table~\ref{tab:ranking} shows that all three methods are near-perfect on
location, linear dependence, and off-manifold support. On dispersion collapse and the four
moment-restored departures, ZID orders $83$--$100\%$ of pairs correctly, whereas FID is at or below
chance ($27$--$55\%$) and KID reaches only $48$--$64\%$.

\begin{samepage}
To broaden the comparison beyond FID and KID, we apply both ranking protocols
to these four standalone tests and three additional external comparators: multiscale MMD, MIND,
and ECS (App.~\ref{app:frozen-design-controls}). Among the five external comparators, ZID has the highest
mean and minimum Spearman correlations across the eight departures, as well as the highest mean and minimum
pairwise ordering accuracies. Among these standalone tests, RISE is essentially tied with ZID in mean
pairwise ordering accuracy ($.969$ versus $.968$) and has higher minimum accuracy ($.90$ versus $.83$), whereas
ZID has higher mean and minimum Spearman correlations.
\end{samepage}

\subsection{Construction ablations and sensitivity}\label{sec:design-sensitivity}
To examine the roles of RISE, GPK-med, and GPK-small and the two coordinates
within ZID, Table~\ref{tab:arm-selection} compares the full six-arm construction with variants that omit
RISE, GPK-med, or GPK-small and two coordinate-only variants. Table~\ref{tab:arm-severity} isolates the contribution
of GPK-small by comparing the full six-arm ZID with its variant without GPK-small across four severity levels.
Table~\ref{tab:memorize} studies a moment-restored resampling-with-perturbation construction.
Starting from two disjoint real samples $R$ and $H$, with $R$ as the reference, the construction
replaces a fraction $f$ of the observations by perturbed
resamples from $H$ and then restores the sample mean and covariance of $H$
(App.~\ref{app:controlled-departures}). The table examines how the
full six-arm ZID, RISE, GPK-med, and GPK-small, together with FID and KID, order increasing values of $f$
across perturbation scales. Finally, we check sensitivity to the Gaussian-reference tail
probabilities by replacing them with permutation estimates (App.~\ref{app:controlled}).

\begingroup
\setcounter{bottomnumber}{2}
\renewcommand{\bottomfraction}{0.90}
\renewcommand{\textfraction}{0.05}
\begin{table}[!t]\centering\small
\caption{\textbf{Paired construction ablation at the Fig.~\ref{fig:dom}
signal levels.}
Results use real CIFAR-Inception features at $d=128,n=200,\alpha=.05$, with
300 repetitions and 499 outer permutations. Rejection uses $p\le.05$.
Within each repetition, every specification uses the same
samples, six computed arms, and label permutations.
Among the rows omitting RISE, \refrev{GPK-med}, or \refrev{GPK-small}, bold marks the largest power loss
from the full six-arm ZID for each departure.}
\label{tab:arm-selection}
{\renewcommand{\arraystretch}{0.90}\setlength{\tabcolsep}{2.5pt}%
\begin{tabular}{@{}lrrrrrrrrr@{}}
\toprule
specification & location & dispersion & lin.-dep. & nonlin.-dep. & skew & kurtosis &
multimodal & off-manifold & mean \\
\midrule
full six-arm ZID &
.830 & .750 &
.727 & .980 &
.847 & .847 &
.717 & .833 &
.816 \\
drop RISE &
.843 & \textbf{.630} &
\textbf{.167} & .893 &
\textbf{.347} & \textbf{.210} &
\textbf{.317} & \textbf{.527} &
.492 \\
drop \refrev{GPK-med} &
\textbf{.237} & .760 &
.800 & .983 &
.893 & .907 &
.767 & .870 &
.777 \\
drop \refrev{GPK-small} &
.840 & .800 &
.740 & \textbf{.860} &
.840 & .897 &
.697 & .853 &
.816 \\
\midrule
$Z_W$ arms only &
.850 & .073 &
.793 & .983 &
.830 & .747 &
.777 & .100 &
.644 \\
$Z_D$ arms only &
.063 & .837 &
.127 & .467 &
.407 & .787 &
.130 & .910 &
.466 \\
\bottomrule
\end{tabular}}
\end{table}

\begin{table}[!t]\centering\small
\caption{\textbf{Paired contribution of GPK-small across severity levels.}
Entries are paired rejection-rate differences,
$\Delta=\operatorname{Power}(\text{full six-arm ZID})-
\operatorname{Power}(\text{ZID without GPK-small})$. Positive values favor including \refrev{GPK-small}.
The first seven departures use perturbation multipliers $0.50,0.75,1.00,1.25$; matched-moment
multimodality replaces fractions $.25,.50,.75,1.00$ of the initially real observations with their
moment-matched bimodal counterparts. All cells use 300 paired repetitions,
499 outer permutations, and rejection at $p\le.05$. \textbf{Bold} marks $|\Delta|\ge.05$.}
\label{tab:arm-severity}
{\renewcommand{\arraystretch}{0.90}
\begin{tabular}{lrrrr}
\toprule
departure & setting 1 & setting 2 & setting 3 & setting 4 \\
\midrule
common multiplier & $0.50$ & $0.75$ & $1.00$ & $1.25$ \\
\cmidrule(lr){1-5}
location & $-.013$ & $-.027$ & $-.013$ & $+.003$ \\
dispersion & $-.040$ & $\mathbf{-.060}$ & $-.040$ & $-.017$ \\
linear dependence & $+.010$ & $-.023$ & $-.017$ & $.000$ \\
nonlinear dependence & $\mathbf{+.343}$ & $\mathbf{+.193}$ & $\mathbf{+.077}$ & $+.013$ \\
skewness & $.000$ & $-.027$ & $.000$ & $+.010$ \\
kurtosis & $-.020$ & $\mathbf{-.060}$ & $-.030$ & $.000$ \\
off-manifold support & $+.017$ & $-.020$ & $-.027$ & $.000$ \\
\midrule
multimodality replacement fraction & $.25$ & $.50$ & $.75$ & $1.00$ \\
\cmidrule(lr){1-5}
matched-moment multimodality & $.000$ & $+.023$ & $+.033$ & $+.030$ \\
\bottomrule
\end{tabular}}
\end{table}

\begin{table}[!htbp]\centering\small
\caption{\textbf{Moment-restored resampling with perturbation.} Spearman $\rho$ between each score and the replacement fraction $f$ (real CIFAR-Inception, $d{=}128$, $n{=}200$), evaluated across perturbation scales $\varepsilon$ (fractions of the per-coordinate standard deviations). Five perturbation scales $\varepsilon\in\{.05,.10,.20,.30,.50\}$ and six replacement fractions $f\in\{0,.1,.2,.3,.4,.5\}$ are evaluated with 100 repetitions at each $(\varepsilon,f)$ setting. Within each repetition, every method is evaluated on the same reference sample $R$ and transformed sample $B$. Their standalone columns use $T=Z_W^2+Z_D^2$. \textbf{Bold} marks the largest correlation in each row.}
\label{tab:memorize}
\begin{tabular}{cc|cc|ccc}
\toprule
$\varepsilon$ & \textbf{ZID} & FID & KID & RISE & \refrev{GPK-med} & \refrev{GPK-small} \\
\midrule
0.05 & $+.97$ & $-.10$ & $-.13$ & $+.74$ & $-.10$ & $\mathbf{+.98}$ \\
0.10 & $+.96$ & $-.04$ & $+.02$ & $+.71$ & $+.00$ & $\mathbf{+.97}$ \\
0.20 & $+.96$ & $+.04$ & $-.01$ & $+.66$ & $-.07$ & $\mathbf{+.97}$ \\
0.30 & $+.93$ & $+.02$ & $+.06$ & $+.55$ & $+.05$ & $\mathbf{+.95}$ \\
0.50 & $\mathbf{+.86}$ & $+.06$ & $+.04$ & $+.23$ & $-.07$ & $+.84$ \\
\bottomrule
\end{tabular}
\end{table}

Table~\ref{tab:arm-selection} shows complementary roles within the full six-arm
ZID. Omitting RISE produces the largest power loss on six of the
eight departures; omitting
\refrev{GPK-med} reduces location power from $.830$ to $.237$; and omitting \refrev{GPK-small} has its
clearest effect on nonlinear dependence, reducing power from $.980$ to $.860$. Among the coordinate-only
variants, the $Z_W$-only variant is stronger on location, dependence, skewness,
and multimodality, whereas the $Z_D$-only variant is stronger on dispersion, kurtosis, and off-manifold
support.
\endgroup

With bandwidth $0.175\sigma_{\rm med}$, \refrev{GPK-small} targets unusually
close-pair structure, complementing the rank-based and median-bandwidth members.
Table~\ref{tab:arm-severity} shows that \refrev{GPK-small} contributes most
clearly on nonlinear dependence: its incremental power contribution grows from $+.013$ at multiplier
$1.25$ to $+.343$ at $.50$, with the largest gain at the lower-signal end of the tested range.
Its effects on the other seven departures are smaller and occasionally negative.
Table~\ref{tab:memorize} provides a second view of the fine-scale role: under a
moment-restored resampling-with-perturbation construction, the full six-arm ZID score remains strongly
associated with the replacement fraction across perturbation scales ($\rho=.86$--$.97$). \refrev{GPK-small}
supplies the clearest member-level signal ($.84$--$.98$), while RISE weakens from $.74$ to $.23$ as the
perturbation scale increases; FID, KID, and \refrev{GPK-med} remain near zero.

\paragraph{Gaussian-reference versus permutation arm tails.}
For flat-Simes aggregation, ZID assigns each standardized arm the two-sided
Gaussian-reference tail probability $2\bar\Phi(|Z_k|)$. To check sensitivity to this choice, we compare
these probabilities with estimates obtained from inner label permutations. Both resulting scores use the
same outer calibration and therefore control Type-I error. They give broadly similar power profiles
(App.~\ref{app:controlled}). Because permutation estimation requires an additional inner loop, we use the
computationally simpler Gaussian-reference formula.

\FloatBarrier
\subsection{Sensitivity across severity and \texorpdfstring{$(d,n)$}{(d,n)} regimes}\label{sec:robustness}
We evaluate the full six-arm ZID over five ordered severity levels with
$d=128$, $n=200$, 300
repetitions, and 499 outer permutations per cell. Seven departures use multiples
$\{0,.5,.75,1,1.25\}$ of the Fig.~\ref{fig:dom} signal; matched-moment multimodality instead uses
fractions $\{0,.25,.5,.75,1\}$ of the initially real observations replaced by their
moment-matched bimodal counterparts; $\lambda=0$ is the unmodified real sample and $\lambda=1$ is the full
bimodal alternative, the highest-severity endpoint.
Across each five-level ladder, the empirical rejection rate increases monotonically with the perturbation
multiplier, or with the replacement fraction for matched-moment multimodality
(App.~\ref{app:severity}, Fig.~\ref{fig:severity-suite}).

To compare relative sensitivity across several dimension--sample-size regimes,
we repeat the eight-departure comparison at $d\in\{128,512,2048\}$ and
$n\in\{50,200,600\}$ (Fig.~\ref{fig:domrobust}). Signals are selected separately within each $(d,n)$ cell, so comparisons are
within cells rather than power trends over $d$ or $n$. Method definitions and calibration procedures
follow Fig.~\ref{fig:dom}. Rows requiring permutation calibration use 99 relabelings per repetition.
We define a cell-wise leading group among ZID and the \refrev{twelve} external
comparators using a threshold of 80\% of the highest power within that set. Across the 72 cells, ZID enters
this group 63 times, followed by Density (28), PERMDISP (17), and FID (11); each remaining external
comparator enters it at most nine times. When assessed against the same threshold, the four
standalone rows reach it in 58 cells for RISE, 50 for GET, 18 for GPK-med, and 13 for GPK-small.

\begin{table}[!b]\centering\small
\caption{\textbf{Cross-dataset comparison across four perturbations.}
Entries report empirical power at the $0.05$ significance level on CIFAR-10 and Tiny-ImageNet (200 classes), with dataset-specific
PCA-128 maps fitted to $5{,}000$ Inception features, $m=n=200$, 300 repetitions, and 499 relabelings.
For mode-drop, CIFAR-10 retains $80\%$ of its classes and Tiny-ImageNet retains $50\%$; the other three perturbations use the same displayed strengths in both datasets.
Bold marks row maxima.}
\label{tab:real}
\resizebox{\textwidth}{!}{%
\begin{tabular}{llcccccccc}
\toprule
dataset & alternative & ZID & FID & KID & MIND & Density & Recall & Coverage & C2ST \\
\midrule
CIFAR-10 & collapse ($s=.05$) &
\textbf{.97} & .08 &
.06 & .18 & .96 &
.68 & .18 &
.03 \\
 & over-disp. ($\tau=.25$) &
.68 & .07 &
.07 & .05 & \textbf{.86} &
.26 & .39 &
.05 \\
 & location ($\delta=.40$) &
.91 & .67 &
\textbf{.97} & .93 & .15 &
.05 & .10 &
.11 \\
 & mode-drop ($80\%$ retained; 2 classes dropped) &
.85 & \textbf{.93} &
.88 & .83 & .06 &
.10 & .86 &
.15 \\
\midrule
Tiny-IN & collapse ($s=.05$) &
.89 & .08 &
.05 & .18 & \textbf{.92} &
.62 & .15 &
.05 \\
 & over-disp. ($\tau=.25$) &
.62 & .07 &
.05 & .08 & \textbf{.80} &
.27 & .37 &
.05 \\
 & location ($\delta=.40$) &
\textbf{.94} & .54 &
.93 & .88 & .25 &
.09 & .05 &
.06 \\
 & mode-drop ($50\%$ retained; 100 classes dropped) &
.80 & \textbf{.94} &
.82 & .65 & .05 &
.28 & .33 &
.30 \\
\bottomrule
\end{tabular}}
\end{table}

\begin{figure}[!b]\centering
\includegraphics[width=\textwidth]{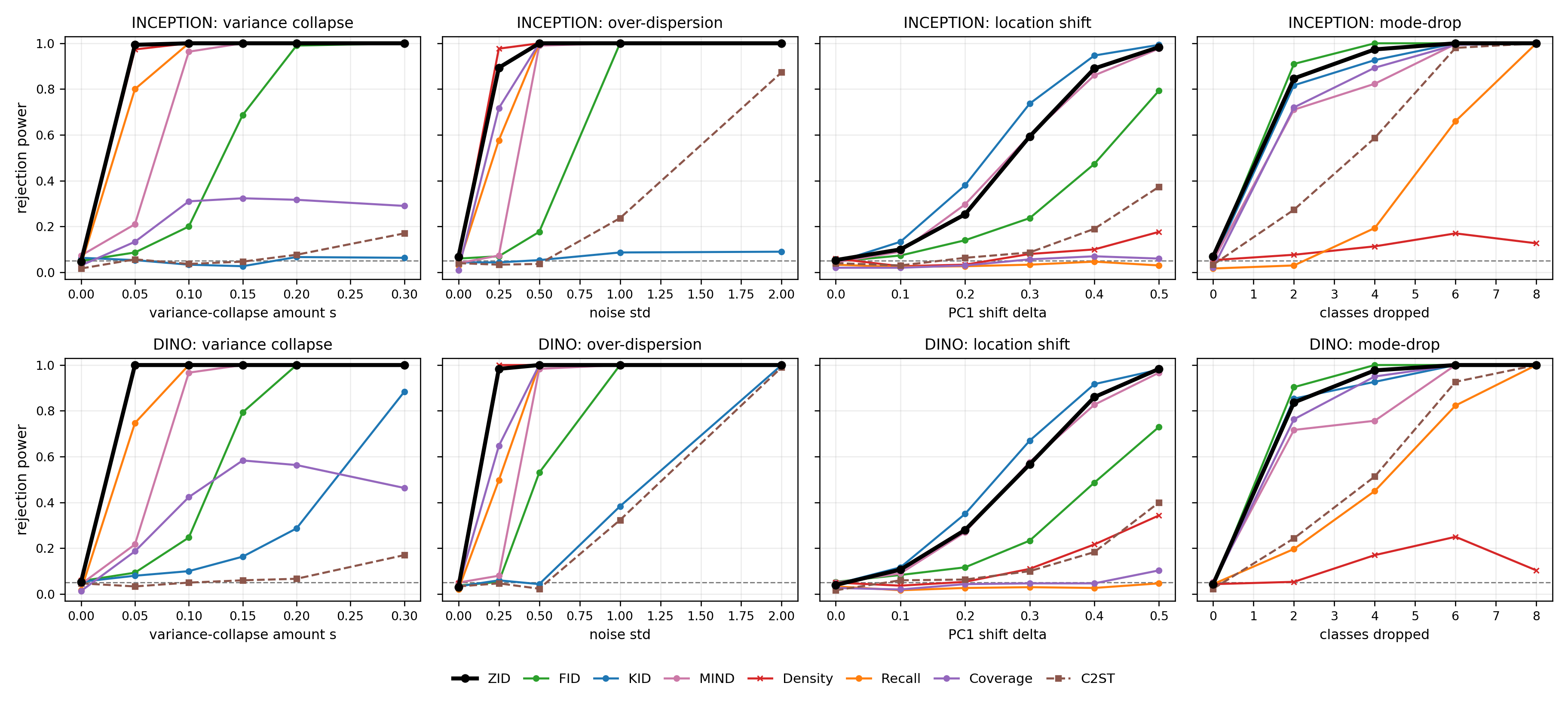}
\caption{\textbf{Detection power across perturbation levels on CIFAR-10.}
Rows use Inception-2048 and DINO ViT-S/16 CLS embeddings; columns show variance collapse
(variance-collapse amount $s$, where deviations from the reference-feature mean are multiplied by $1-s$), over-dispersion (adding independent
Gaussian noise with coordinate-wise standard deviation $\tau$ times the reference standard deviation),
location shift (translating by $\delta$ reference standard deviations along the leading PC), and
mode-drop (omitting the displayed number of classes). Curves report power at the $0.05$
significance level with $m=n=200$ and 300 repetitions per level. Permutation-calibrated methods use 99
relabelings.}
\label{fig:real}
\end{figure}

\subsection{Additional dataset and representation comparisons}\label{sec:real-embedding-transfer}

The controlled evaluations above use CIFAR-10 Inception features.
We next examine whether their response patterns persist under a different
dataset or feature representation. Table~\ref{tab:real} evaluates variance collapse,
over-dispersion, a leading-PC location shift, and class mode-drop on CIFAR-10 and Tiny-ImageNet
(200 classes) in Inception features\footnote{The Inception representation is identified in App.~\ref{app:controlled-departures}.
Absolute FID/KID magnitudes are on an implementation-specific
scale; the analysis uses their relative rankings, and Fig.~\ref{fig:real} reports the corresponding DINO evaluation.}. The two datasets show closely matched response profiles: ZID has high
power on collapse and location in both datasets, Density leads
over-dispersion, and FID leads mode-drop; ZID power remains $.62$--$.97$ across all eight comparisons.

Figure~\ref{fig:real} traces power across perturbation levels for the same four
departures on CIFAR-10 using Inception-2048 and DINO ViT-S/16 CLS embeddings
\citep{caron2021dino}. \refrev{The DINO representation is identified in App.~\ref{app:controlled-departures}.}
Across the full curves, ZID responds to all four departures in both representations:
power rises rapidly with variance collapse, over-dispersion, and class omission, and increases smoothly with
the leading-PC shift. At selected levels, Density leads over-dispersion, KID leads location shift, and FID leads
mode-drop.

\section{Evaluation of pretrained generative models}\label{sec:real}\label{sec:realgen}

We examine five evaluations of pretrained generative models spanning GAN, diffusion, and
interpolant-transformer architectures. They test truncation ordering, sensitivity to sampling steps, and
the transition from low-guidance over-dispersion to high-guidance collapse; Table~\ref{tab:external-validity}
summarizes each setting. Appendix~\ref{app:generator-protocol} gives the sampling, reference, and
calibration protocol.

\FloatBarrier
\begin{table}[H]\centering\footnotesize
\caption{\textbf{Evaluations of pretrained generative models.}
Each row varies only the listed sampling or truncation parameter; the pretrained
generator and evaluation setup are otherwise fixed.}
\label{tab:external-validity}
\setlength{\tabcolsep}{4pt}
\begin{tabular}{@{}>{\raggedright\arraybackslash}p{.13\linewidth}
>{\raggedright\arraybackslash}p{.09\linewidth}
>{\raggedright\arraybackslash}p{.08\linewidth}
>{\raggedright\arraybackslash}p{.11\linewidth}
>{\raggedright\arraybackslash}p{.16\linewidth}
>{\raggedright\arraybackslash}p{.17\linewidth}
>{\raggedright\arraybackslash}p{.15\linewidth}@{}}
\toprule
generator & paradigm & domain & varied parameter & embedding & evaluation scale & role \\
\midrule
BigGAN-deep-128 & GAN & ImageNet & truncation & Inception-2048 & 10 classes, $n=450$ & truncation ordering and direction \\
\cmidrule(lr){1-7}
CIFAR DDPM & diffusion & CIFAR-10 & DDIM steps & Inception-2048 + DINOv2-384 & $n=200$ & sampling-step comparison and direction \\
\cmidrule(lr){1-7}
DiT-XL/2 & diffusion transformer & ImageNet & CFG & Inception-2048 & 6 classes, $n=500$ & over- to under-dispersion \\
\cmidrule(lr){1-7}
SiT-XL/2 & interpolant transformer & ImageNet & CFG & Inception-2048 & 6 classes, $n=500$ & \refrev{guidance transition in a second architecture} \\
\cmidrule(lr){1-7}
StyleGAN2-ADA & GAN & FFHQ faces & truncation & Inception-2048 & $n=5000$ & full-rank evaluation on FFHQ \\
\bottomrule
\end{tabular}
\end{table}

\paragraph{BigGAN-deep-128: ordering along truncation.}
On \textbf{BigGAN-deep-128} \citep{brock2019large} ($m{=}n{=}450$, $10$ ImageNet classes),
FID is lowest at $\psi=.8$ and rises under stronger truncation. KID and the ZID score both increase
monotonically as truncation tightens, from $.0075$ to $.0173$ and from $328$ to $1150$, respectively.
The ZID test rejects all four generator--real comparisons at $p=.002$. ZID reports under-dispersion
throughout, with $S_D$ increasing from $42.2$ to $176.7$ (App.~\ref{app:biggan}).

\paragraph{CIFAR DDPM: sampling-step comparison and direction.}
On a pretrained CIFAR-10 \textbf{diffusion} model
\citep{ho2020denoising}, ZID rejects all three generator--real comparisons, and both the ZID score and
KID move toward their real--real values as DDIM steps increase \citep{song2021denoising}. FID likewise
decreases from $173.6$ to $130.8$ toward the real--real value $121.0$. ZID reports under-dispersion at
five steps and a member-sign conflict at 15 and 25 steps
(Table~\ref{tab:ddpm}).

\begin{table}[!b]\centering\small
\caption{\textbf{DDPM sampling-step sweep.}
\refrev{A pretrained CIFAR-10 DDPM versus real CIFAR-10 in Inception
$d{=}2048$, $m{=}n{=}200$). The ZID equality test and diagnostic gate each use 499 outer permutations.
The real--real row compares disjoint halves and has no assigned dispersion direction.}}
\label{tab:ddpm}
\begin{tabular}{lccccp{.25\linewidth}}
\toprule
DDIM steps & FID & KID & ZID score & \shortstack{ZID test\\$p$-value} & dispersion diagnosis \\
\midrule
5 & 173.6 & .0789 & 6442 & .002 & under-dispersion \\
\cmidrule(lr){1-6}
15 & 138.3 & .0223 & 519 & .002 & member-sign conflict \\
\cmidrule(lr){1-6}
25 & 130.8 & .0148 & 255 & .002 & member-sign conflict \\
\midrule
real--real & 121.0 & .0004 & .109 & .878 & not assigned \\
\bottomrule
\end{tabular}
\end{table}

\paragraph{DiT-XL/2: one guidance sweep, two failure directions.}
\label{sec:dit-cfg}

We vary classifier-free guidance (CFG) on \textbf{DiT-XL/2-256}
\citep{peebles2023dit} for six ImageNet classes ($n{=}500$), producing a trajectory from diffuse,
low-precision samples to high-guidance diversity collapse. At
$\mathrm{CFG}\in\{1,2,4,8\}$~\citep{ho2022classifierfree}, we compare each sample set with matched real images:
Higher Precision and Recall indicate better fidelity and diversity,
respectively, whereas smaller FID, KID, and ZID scores indicate less departure within this fixed sweep. Low guidance is over-dispersed/off-manifold,
while increasing guidance improves fidelity but collapses diversity.
In the class-aggregated curves in Fig.~\ref{fig:dit-cfg}(a), FID, KID, and
the ZID score attain their minima at $\mathrm{CFG}{=}2$. These scalar summaries identify the same
aggregate closest setting, but do not indicate whether larger departures reflect off-manifold
over-dispersion or diversity collapse. Precision and Recall must be read together: at
$\mathrm{CFG}{=}8$, Precision can remain as high as $0.89$ while Recall falls to $0.025$; at
$\mathrm{CFG}{=}1$, mean Recall is $0.75$ even though Precision is only $0.4$--$0.65$. Recall alone can
therefore make the low-guidance samples appear strong despite their off-manifold mass, just as Precision
alone can make the collapsed high-guidance samples appear strong.

\refrev{As CFG increases from $2$ to $8$,} ZID's
$W$-component magnitude $S_W$ rises through CFG 8 in all six classes. The final
aggregate dispersion readout is over-dispersion at CFG$=1$ and
under-dispersion at CFG$\in\{2,4,8\}$ for every class; the diagnostic readout $D_{\mathrm{ZID}}$ therefore changes from
positive at low guidance to negative as guidance rises.
Figure~\ref{fig:dit-cfg} is the counterpart of Fig.~\ref{fig:signed} for a pretrained generative model: departure
magnitude increases on both sides of the closest sampled setting, while the signed readout
distinguishes the two failure directions. Per-class results are in App.~\ref{app:dit-cfg}.

\begin{figure}[H]
\centering
\includegraphics[width=0.95\linewidth]{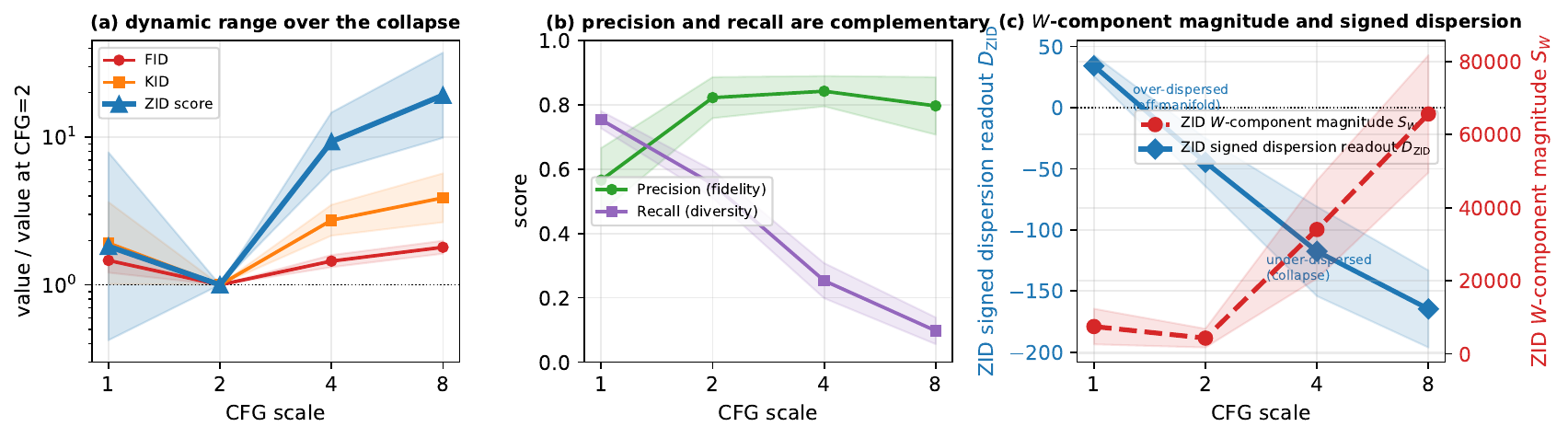}
\caption{\textbf{A guidance sweep from over- to under-dispersion.}
Across six DiT-XL/2 ImageNet classes and $\mathrm{CFG}\in\{1,2,4,8\}$:
(a) thick lines are geometric means across the six class-specific ratios to their CFG$=2$ values, and
shading spans one standard deviation on the log-ratio scale;
(b) Precision can remain high while Recall collapses; and
(c) ZID's nonnegative $W$-component magnitude $S_W$ tracks departure magnitude along this sweep, while its diagnostic readout $D_{\mathrm{ZID}}$
flips from over-dispersed ($>0$, off-manifold) at low guidance to under-dispersed ($<0$, collapse).}
\label{fig:dit-cfg}
\end{figure}

\FloatBarrier
\paragraph{SiT-XL/2: guidance transition in a second architecture.}
The same transition appears on the interpolant-based transformer
\textbf{SiT-XL/2} \citep{ma2024sit}, over six ImageNet classes with $n=500$ and
$\mathrm{CFG}\in\{1,2,4,8,16\}$. Across the six classes, the signed dispersion
readout indicates over-dispersion at $\mathrm{CFG}=1$ and under-dispersion at $\mathrm{CFG}\in\{4,8,16\}$; at $\mathrm{CFG}=2$,
five classes are under-dispersed and one is not assigned. At $\mathrm{CFG}=16$, FID exceeds its $\mathrm{CFG}=1$ value
on five classes but remains lower on class~910 ($107.2$ versus $125.8$; App.~\ref{app:dit-cfg},
Fig.~\ref{fig:sit-cfg}).
\paragraph{StyleGAN2-ADA: full-rank evaluation on FFHQ.}
We examine the collapse diagnosis outside diffusion models and ImageNet by sweeping \textbf{StyleGAN2-ADA}
(StyleGAN2: \citealp{karras2020analyzing}; ADA: \citealp{karras2020training}) on \textbf{FFHQ}
\citep{karras2019stylegan} over truncation $\psi\in\{1.0,0.9,0.7,0.5\}$
($n{=}5000{>}d$, giving a full-rank sample covariance and reduced
finite-sample bias). Table~\ref{tab:stylegan-truncation} summarizes the sweep.

FID, KID, and \refrev{Gaussian-kernel MMD} all increase monotonically as truncation tightens,
correctly ordering this sweep. ZID likewise separates every truncated setting from the untruncated generator reference
and additionally identifies under-dispersion, with a signed readout whose magnitude increases as
$\psi$ falls. Precision and Density instead increase as Recall declines, reflecting their emphasis on
different aspects of the precision--recall tradeoff.
In this full-rank FFHQ setting, FID, KID, \refrev{Gaussian-kernel MMD}, and the ZID score therefore agree on the
truncation ordering, while ZID additionally supplies calibrated detection and direction.

\begin{table}[H]\centering\small
\caption{\textbf{StyleGAN2-ADA truncation on FFHQ.}
Columns compare independent draws at the indicated $\psi$ with an untruncated generator reference;
the $\psi=1$ versus $\psi=1$ baseline uses two independent untruncated draws. FID and ZID use $5000$ samples; KID,
\refrev{Gaussian-kernel MMD}, Precision, Density, and Recall use a fixed $2500$-sample subset. Arrows give the conventional preferred direction. Within each metric row,
bold marks the best value for that metric. The ZID test $p$-value and $D_{\mathrm{ZID}}$ are diagnostic quantities rather than
ranking criteria and are not bolded. Larger ZID scores indicate stronger departures, while
negative $D_{\mathrm{ZID}}$ indicates under-dispersion; the $\psi=1$ versus $\psi=1$ baseline has no assigned direction.}
\label{tab:stylegan-truncation}
\setlength{\tabcolsep}{7pt}
\begin{tabular}{lrrrr}
\toprule
metric & \shortstack{$\psi=1$ vs.\\$\psi=1$ baseline} & $\psi=.9$ & $\psi=.7$ & $\psi=.5$ \\
\midrule
FID $\downarrow$       & \textbf{5.97} & 6.97 & 22.13 & 57.89 \\
KID $\downarrow$       & $\mathbf{-4.4{\times}10^{-5}}$ & .00119 & .01461 & .04596 \\
\refrev{Gaussian-kernel MMD} $\downarrow$ & \refrev{$\mathbf{-4.0\times10^{-5}}$} & \refrev{.00130} & \refrev{.01664} & \refrev{.05510} \\
Precision $\uparrow$   & .817 & .876 & .931 & \textbf{.963} \\
Density $\uparrow$     & .964 & 1.169 & 1.604 & \textbf{1.944} \\
Recall $\uparrow$      & \textbf{.750} & .698 & .506 & .221 \\
\midrule
ZID score $\downarrow$     & \textbf{.52} & 626.6 & $1.35{\times}10^5$ & $1.41{\times}10^6$ \\
ZID test $p$-value & .502 & .002 & .002 & .002 \\
$D_{\mathrm{ZID}}$ & undefined & $-22.7$ & $-441.0$ & $-1215.6$ \\
\bottomrule
\end{tabular}
\end{table}

\paragraph{What ZID adds across generators.}
Across the five generator families, ZID jointly provides scalar ordering,
calibrated detection, and directional diagnosis. On BigGAN and StyleGAN2-ADA, the ZID score increases
as truncation tightens and its signed readout identifies under-dispersion; conventional distances respond
as well but do not supply direction. On DiT and SiT, the ZID score measures departure on both sides of
the closest sampled guidance setting, while its signed readout distinguishes low-guidance over-dispersion
from high-guidance collapse. On DDPM, ZID reports under-dispersion at five steps and retains the member
signs when the fixed-bank comparison gives conflicting directions at later steps.

\FloatBarrier
\section{Gaming FID in pixel space}\label{sec:pixelgaming}
Proposition~\ref{prop:matched-moment} states the matched-moment blind spot in
feature space; here we realize it by optimizing actual pixels. Starting from noise we optimize a set of $N{=}512$ images at $64{\times}64$ by gradient
descent to match an ImageNet reference's Inception mean and covariance.
\refrev{The pretrained feature extractor is held fixed and only the image pixels are updated.}

This optimization directly targets the moments that determine FID. It lowers
FID to $24.7$, below the real--real baseline of $58.6$, while the resulting images remain visually
unrecognizable (Fig.~\ref{fig:pixelgaming}).
On the same saved feature banks, the real--holdout and real--gamed KID estimates are
$8.6{\times}10^{-6}$ and $-5.2{\times}10^{-5}$, respectively (App.~\ref{app:feature-gaming}).
Against the same reference and using 499 outer permutations, the held-out-real comparison yields a ZID
score of $0.86$ ($p{=}.328$), whereas the gamed-set comparison yields $17.2$, larger than all 499 relabeled scores.
For the gamed pair, ZID reports component magnitudes $S_W{=}17.9$ and
$S_D{=}9.8$; $S_D$ also exceeds all 499 relabeled values. The RISE dispersion arm is not
reference-active ($Z_D^{\rm RISE}{=}.58$, $p{=}.565$). The two active GPK arms have opposite signs
($Z_D^{\rm med}{=}{-}3.5$, $Z_D^{\rm small}{=}+4.3$), so $D_{\mathrm{ZID}}$ is undefined and the
dispersion diagnosis is \emph{member-sign conflict}.

Because the $512$ images are jointly optimized against the same reference
used for evaluation, the generated batch is reference-adaptive rather than an i.i.d.\ draw independent
of that reference. For this fixed pair, none of the 499 random outer
relabelings produces an equally or more extreme score; the plus-one Monte Carlo tail is therefore
$.002$. Because the pair is reference-adaptive, this tail is descriptive, not a calibrated population
two-sample $p$-value. Thus moment matching can drive FID below its real-data baseline for visually
unrecognizable samples, while ZID assigns this FID-targeted construction a large departure score under the same
reference.

A separate CIFAR-10 replication with the same $N{=}512$ optimization drives the
gamed FID to $31.1$, below the $76.0$ real--real baseline, while
the ZID score exceeds all 499 randomly relabeled scores \refrev{(App.~\ref{app:feature-gaming})}.

\enlargethispage{\baselineskip}
\section{Conclusion}\label{sec:conclusion}
FID's compactness is also its limitation: reducing a feature distribution to
its mean and covariance can make distinct distributions indistinguishable and cannot encode the direction
of a diversity change. ZID separates three tasks that a single scalar is often asked to serve. Its six-arm
score orders settings by severity within a fixed protocol, its outer-permutation $p$-value tests
distributional equality, and its component readouts characterize location-sensitive joint departure and
report a dispersion direction when supported.

Across controlled feature departures, real-feature replications, and
evaluations of pretrained generative models, these outputs provide broad detection, track progression as severity increases, and distinguish
low-guidance over-dispersion from high-guidance collapse. Retained member signs also reveal when
dispersion direction differs across scales. The evidence therefore
supports pairing calibrated evidence of distributional difference with diagnostic readouts rather than
relying on an unsigned moment summary alone.

\bibliographystyle{plainnat}
\bibliography{refs}

\section*{Use of AI assistance}
A large language model was used as a coding and writing assistant under the author's direction; all methods, code, and numerical results were verified by the author, who takes full responsibility for all claims.

\appendix

\section{Foundations and controlled validation}\label{app:foundations}

\subsection{Matched-moment feature controls}\label{app:feature-gaming}
{
Table~\ref{tab:feature-gaming} tests whether the feature-space FID inversion in the Introduction is
specific to one dataset or embedding. Each comparison uses $m=n=2500$ samples after projection to the
leading 128 pooled principal components. The Gaussian construction is drawn from the Gaussian fitted
to the reference sample.

\begin{table}[ht]\centering\small
\caption{\textbf{FID under fitted-Gaussian feature controls.} Lower values are nominally better.
The real--real column compares the reference with a held-out real sample; the Gaussian column replaces
the held-out sample by a draw from the fitted Gaussian. DINO denotes the
DINO ViT-S/16 CLS embeddings ($d=384$) defined in Sec.~\ref{sec:real-embedding-transfer}.}
\label{tab:feature-gaming}
\begin{tabular}{lrr}
\toprule
Dataset / embedding & real--real & fitted Gaussian \\
\midrule
CIFAR-10 / Inception & 2.517 & 1.157 \\
Tiny-ImageNet / Inception & 2.884 & 1.291 \\
CIFAR-10 / DINO & 76.472 & 33.864 \\
Tiny-ImageNet / DINO & 113.849 & 53.552 \\
\bottomrule
\end{tabular}
\end{table}

Table~\ref{tab:matched-bimodal-values} restores the reference sample's empirical mean and covariance
exactly. By contrast, the large-sample comparison in Sec.~\ref{sec:signed-imbalance} draws from a fixed population-matched alternative, so its empirical
FID follows the real--real finite-sample baseline rather than being numerically zero. In that comparison,
empirical FID is evaluated through $n=25$k observations in each sample. The six-arm ZID comparison uses
$n=2$k, 20 repetitions, and 99 outer permutations.

{
\begin{table}[H]\centering\small
\caption{\textbf{Full-dimensional matched-bimodal feature controls.} Each comparison uses raw
Inception-2048 features with $m=n=2500$. The matched-bimodal sample is recolored to the reference
mean and covariance. KID is the unbiased degree-$3$ polynomial-kernel MMD estimate and may be
slightly negative.}
\label{tab:matched-bimodal-values}
\begin{tabular}{llrr}
\toprule
Dataset & Metric & real--real & matched bimodal \\
\midrule
CIFAR-10 & FID & 19.5146 & 0.0030 \\
         & KID & $9.5{\times}10^{-5}$ & $-3.20{\times}10^{-4}$ \\
Tiny-ImageNet & FID & 25.8680 & 0.0024 \\
              & KID & $-5.7{\times}10^{-5}$ & $-3.39{\times}10^{-4}$ \\
\bottomrule
\end{tabular}
\end{table}

The pixel-space stress test in Sec.~\ref{sec:pixelgaming} uses a separate pair of
$512\times2048$ feature banks. On these same banks, KID is $8.6{\times}10^{-6}$ for real versus held-out real
and $-5.2{\times}10^{-5}$ for real versus the optimized batch.

\refrev{The CIFAR-10 replication follows the same reference-adaptive stress-test design: a noise-initialized
image batch is optimized to match the feature mean and covariance of one real sample, while a disjoint
real sample supplies the real--real baseline. Because the optimized batch is adapted to its evaluation
reference, the resulting permutation tail is descriptive rather than a calibrated population two-sample
$p$-value.}
}
}
\FloatBarrier

\subsection{Proof of Proposition~\ref{prop:matched-moment}}\label{app:proof}
\begin{proof}
Let $G=\mathcal N(\mu_P,\Sigma_P)$. If $P\neq G$, take $Q=G$. It has the same mean and covariance as
$P$ and is a different distribution. It remains to handle $P=G$. Write the nonzero part of the spectral
decomposition as $\Sigma_P=U_r\Lambda_rU_r^\top$, where $r\ge1$, and let
$\varepsilon\in\{-1,+1\}^r$ have independent Rademacher coordinates. Define
$Q$ as the law of
\[
\mu_P+U_r\Lambda_r^{1/2}\varepsilon.
\]
This distribution has mean $\mu_P$ and covariance $\Sigma_P$, but it has finite support and therefore
differs from the Gaussian law $P=G$ (including when that Gaussian is supported on a proper subspace).
Thus in either case there is a $Q\neq P$ with matching first two moments, and
$M(P,Q)=0=M(P,P)$.
\end{proof}
\FloatBarrier

\subsection{Controlled-departure overview}
\label{app:controlled-departures}
\refrev{Each controlled comparison starts from two disjoint real-feature samples and applies one
transformation to the second sample. The eight transformations introduce, respectively, a leading-direction
location shift, isotropic contraction or expansion, linear dependence, nonlinear dependence with unchanged
marginals, skewness, kurtosis, moment-matched multimodality, or noise in low-variance coordinates. The
dependence and higher-order transformations are followed by affine moment restoration where required, so
their intended differences are not reducible to a change in feature mean or covariance. Figure~\ref{fig:dom}
uses one panel-selected signal per departure, held fixed across all methods; the ranking and severity studies
use ordered versions of the same transformations.}

\paragraph{Feature representations.}
\refrev{Inception evaluations use the 2048-dimensional penultimate Inception-v3 representation
\citep{szegedy2016inception}; DINO evaluations use ViT-S/16 CLS embeddings \citep{caron2021dino}.
The DDPM representation analysis uses DINOv2 rather than DINO. All use the corresponding standard
released weights and preprocessing. The pixel-space stress test and controlled feature-space studies use
separate standard Inception implementations, so absolute FID and KID values are interpreted only within
their stated protocols.}

\paragraph{Moment-restored resampling with perturbation.}
\refrev{Table~\ref{tab:memorize} begins with an independent real sample, replaces a specified fraction of
its observations by perturbed resamples from that sample, and then restores its empirical mean and
covariance. The replacement fraction controls the amount of resampling, while $\varepsilon$ controls the
perturbation scale relative to the coordinatewise standard deviations.}

\paragraph{Comparator settings.}
\refrev{MIND uses a fixed common set of random projections. Precision, Recall, Density, and Coverage use
a common nearest-neighbor specification. Vendi Score is computed from a cosine-similarity Gram matrix,
PERMDISP compares the samples' mean distances to their respective centroids, and Gaussian-kernel MMD
uses the unbiased $U$-statistic with a pooled median-distance bandwidth. Permutation-calibrated rows
reuse the same pooled samples and statistic-specific representation across relabelings.}

\paragraph{C2ST protocols.}\label{app:c2st-protocol}
\refrev{The controlled comparisons use a stratified half split, a fixed one-hidden-layer MLP trained on
one half, and held-out accuracy on the other. Calibration relabels the balanced held-out labels while
keeping the fitted classifier and its predictions fixed.}

\paragraph{ECS protocol.}\label{app:protocol-details}\label{app:ecs-sensitivity}
\refrev{ECS depends on a frequency parameter $T$: smaller values emphasize lower-order and location
differences, whereas larger values emphasize higher moments and tails. An independent preliminary study
selected the fixed value $T=.28$ to balance these sensitivities before the reported comparison. Because
coordinatewise ECS is not invariant to coordinate rescaling or rotation, its results are conditional on the
PCA coordinates used by the controlled comparison.}

\subsection{Outer-permutation calibration}\label{app:calibration}
Table~\ref{tab:typeI} evaluates the final six-arm equality test on
real-feature null comparisons across two datasets and two embeddings. Its rejection rates remain
near their nominal levels, supporting the outer-permutation calibration used in the main
experiments.

\begin{table}[ht]\centering\small
\caption{\textbf{Outer-permutation calibration on real features.}
This independent real-null study applies one common protocol across four
dataset--embedding settings: 500 repetitions per setting, $m=n=200$, and 99
permutations for the six-arm equality test. The CIFAR--Inception row repeats the Fig.~\ref{fig:dom} null
setting as an internal reference for the matrix; the main-text check uses 999 permutations for finer
$p$-value resolution. DINO denotes the DINO ViT-S/16 CLS
embeddings ($d=384$) defined in Sec.~\ref{sec:real-embedding-transfer}.
Rejection uses $p\le\alpha$; $p=.01$ is the smallest attainable value.}
\label{tab:typeI}
\begin{tabular}{lccc}
\toprule
dataset / embedding & $\alpha{=}.01$ & $.05$ & $.10$ \\
\midrule
CIFAR-10 / Inception      & .008 & .028 & .094 \\
CIFAR-10 / DINO           & .010 & .052 & .104 \\
Tiny-ImageNet / Inception & .010 & .054 & .098 \\
Tiny-ImageNet / DINO      & .014 & .048 & .090 \\
\bottomrule
\end{tabular}
\end{table}

\subsection{GPK, GET, and RISE constructions and sensitivity to arm-level tail probabilities}\label{app:controlled}
GPK, GET, and RISE instantiate within-sample similarity differently:
GPK uses dense Gaussian-kernel weights; GET uses a union of edge-disjoint minimum
spanning trees; and RISE uses a directed nearest-neighbor graph weighted by neighbor rank.
\refrev{The reported standalone analyses use fixed graph specifications and the pooled
median-distance GPK bandwidth.}
Thus GET and RISE differ in graph construction and weighting, whereas GPK is kernel based.
The standalone RISE and GET tests use
$T=Z_W^2+Z_D^2$ with their asymptotic $\chi^2_2$ references. For GPK,
$W-\mathbb E_0W$ is proportional to $\widehat{\mathrm{MMD}}_u^2$
(Sec.~\ref{sec:signed-imbalance}), and the standalone GPK-med and GPK-small tests calibrate their
quadratic statistics by label permutation rather than assigning a $\chi^2_2$ reference from coordinate
standardization. The normal-reference tail of an individual GPK $Z_D$ coordinate is an arm-level
quantity used in constructing ZID, not the standalone GPK omnibus $p$-value. These family-specific
calibrations apply only to these standalone rows; the final six-arm ZID $p$-value is the outer
permutation tail of the complete aggregated score.

To check sensitivity to the Gaussian-reference arm tails used in ZID, a
\refrev{nested comparison replaces each arm's two-sided normal tail by an inner-permutation estimate while
retaining outer-permutation calibration.}
The comparison uses 300 paired repetitions on each of the eight Fig.~\ref{fig:dom} departures; power
at the $0.05$ significance level is reported in
Table~\ref{tab:arm-tail-sensitivity}.
Among the 2,400 paired alternatives, 143 rejection decisions occur only
with Gaussian-reference tails and 53 only with permutation tails. The departure-wise probabilities that
an alternative score exceeds its paired null score range from $.947$ to $.997$ and from $.933$ to
$.997$, respectively; the pooled probabilities are $.964$ and $.958$. Across the 2,400 paired null
comparisons, the corresponding rejection rates are $.050$ and $.037$. The two
methods for assigning arm-level tail probabilities therefore yield similar probabilities that an alternative score exceeds its paired null score,
while the Gaussian-reference implementation has somewhat higher detection power in this finite-permutation
comparison.

\begin{table}[!htbp]
\centering
\small
\caption{\textbf{Sensitivity to arm-level tail probabilities.}
Each cell reports power at the $0.05$ significance level over 300 paired repetitions.
The Gaussian-reference specification uses the two-sided normal-reference tail of each standardized arm;
the permutation-tail specification estimates each arm tail by label permutation.}
\label{tab:arm-tail-sensitivity}
\setlength{\tabcolsep}{9pt}
\begin{tabular}{lcc}
\toprule
departure & Gaussian-reference & permutation-tail \\
\midrule
location & .823 & .727 \\
dispersion & .763 & .763 \\
linear dependence & .707 & .643 \\
nonlinear dependence & .977 & .957 \\
skewness & .827 & .790 \\
kurtosis & .840 & .807 \\
off-manifold support & .760 & .777 \\
matched-moment multimodality & .727 & .660 \\
\bottomrule
\end{tabular}
\end{table}

\subsection{High-dimensional dispersion sensitivity and coordinate separation}\label{app:power}
We compare median-bandwidth GPK, KID, and FID under one aligned protocol.
Figure~\ref{fig:finegrid} compares power across dimension and sample size. The
variance-collapse amount $s$ multiplies standardized deviations from the reference mean by $1-s$, so
$s=0$ is the null.

\begin{figure}[!t]\centering
\includegraphics[width=\textwidth]{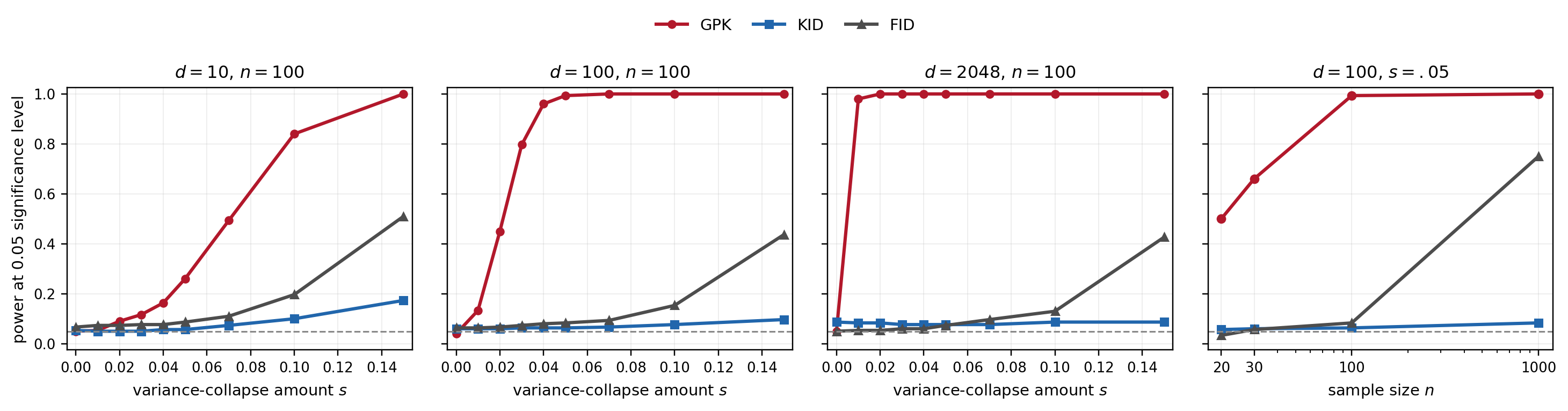}
\caption{\textbf{Aligned sensitivity to Gaussian shrinkage across
dimension and sample size.} Empirical power for median-bandwidth GPK, degree-3 KID,
and FID; $s=0$ denotes the null.}
\label{fig:finegrid}
\end{figure}

Within each repetition, the three methods use the same Gaussian
reference--alternative pair and the same 99 label relabelings; power at the $0.05$ significance level is
estimated from 300 repetitions. The first three panels fix $n=100$ and vary dimension over the full shrinkage grid;
the fourth fixes $d=100$ and $s=.05$ and varies sample size. At $n=100$, GPK power for a $1\%$
shrinkage is $.98$ at $d=2048$; at $d=10$, its power is $.493$ for a $7\%$ shrinkage and $.84$ for a
$10\%$ shrinkage. At $d=100$ and $5\%$ shrinkage, GPK power is $.50$, $.66$, $.993$, and $1.00$ at
$n=20,30,100,$ and $1000$, respectively. FID reaches $.75$ at $n=1000$, while KID remains at $.083$.
Across the displayed range, GPK responds sharply to small shrinkages as dimension or sample size increases,
KID remains below $.20$, and FID rises more gradually.

\section{Protocols and supporting analyses for pretrained generative models}\label{app:real-details}

\subsection{Sampling and calibration for pretrained generative models}
\label{app:generator-protocol}
\refrev{Section~\ref{sec:real} evaluates released generative models under fixed model weights, feature
representations, sampling pipelines, and reference sets. Within each sweep, only the sampling or truncation
parameter listed in Table~\ref{tab:external-validity} changes. Each entry is a fixed reference--model comparison
rather than a power estimate over independently trained or generated models. All ZID equality-test and
$D$-component $p$-values use the
same six-arm construction and 499 outer relabelings.}

\refrev{The BigGAN comparison uses ten ImageNet classes with $m=n=450$. The CIFAR DDPM comparison
uses $m=n=200$ at 5, 15, and 25 sampling steps. DiT-XL/2 and SiT-XL/2 use six ImageNet classes with
$m=n=500$ at each guidance setting.
StyleGAN2-ADA compares independent $n=5000$ draws across the displayed truncation values; the remaining
metrics use a common fixed subset. Inception and DINOv2 results use their respective standard released
representations and preprocessing.}

\subsection{BigGAN truncation control}\label{app:biggan}
BigGAN-deep-128 provides a control in which tighter truncation should
reduce diversity. FID is lowest at $\psi=.8$ and rises from $\psi=.8$ to $.4$
as truncation tightens. Over truncation $\psi$ ($m{=}n{=}450$,
$10$ ImageNet classes), the ZID score
rises from $328$ to $1150$ as truncation tightens, while its aggregated $S_D$ rises
from $42.2$ to $176.7$ and the dispersion diagnosis remains under-dispersion. At $m{=}n{=}450{<}d{=}2048$, the real--real FID is $70.3$, and the
generator--real values are $89$--$102$ (Table~\ref{tab:biggan}).

\begin{table}[H]\centering\small\setlength{\tabcolsep}{4pt}
\caption{\textbf{BigGAN truncation control.}
BigGAN-deep-128 versus real ImageNet from the same $10$ classes
(Inception-2048, $m{=}n{=}450$). The dispersion ratio is
$\operatorname{tr}(\widehat\Sigma_{\rm gen})/\operatorname{tr}(\widehat\Sigma_{\rm real})$.
For the generator rows, $(\widehat p_{\mathrm{ZID}},\widehat p_D)=(.002,.002)$; for real--real,
the pair is $(.598,.856)$.}
\label{tab:biggan}
\begin{tabular}{lcccccccc}
\toprule
$\psi$ & disp.\ ratio & FID & KID & ZID score & $S_W$ & $S_D$ & $D_{\mathrm{ZID}}$ & \shortstack{dispersion\\diagnosis} \\
\midrule
1.0 & 0.82 & 90.0 & .0075 & 328 & 329.1 & 42.2 & $-42.2$ & under-dispersion \\
0.8 & 0.79 & 88.7 & .0108 & 566 & 566.4 & 96.1 & $-96.1$ & under-dispersion \\
0.6 & 0.77 & 96.8 & .0145 & 813 & 813.8 & 119.3 & $-119.3$ & under-dispersion \\
0.4 & 0.76 & 101.6 & .0173 & 1150 & 1151.2 & 176.7 & $-176.7$ & under-dispersion \\
\midrule
real--real & 1.00 & 70.3 & $-.0004$ & .353 & 1.05 & .12 & undefined & not assigned \\
\bottomrule
\end{tabular}
\end{table}
\FloatBarrier

\subsection{Detailed CFG sweeps: DiT and SiT}\label{app:dit-cfg}
Table~\ref{tab:dit-cfg-full} gives the class-specific DiT-XL/2 guidance
results underlying the aggregate main-text view in Fig.~\ref{fig:dit-cfg}.
\begin{table}[H]\centering\footnotesize
\setlength{\tabcolsep}{3.5pt}
\caption{\textbf{Per-class DiT-XL/2 classifier-free-guidance sweep.} Six ImageNet classes with class-matched real images (Inception-2048, $m=n=500$). Parentheses report $\widehat p_D$; ``over'' and ``under'' abbreviate the dispersion diagnoses. All $\widehat p_{\mathrm{ZID}}=.002$ with 499 permutations.}
\label{tab:dit-cfg-full}
\begin{tabular}{lrcccccccc}
\toprule
class & CFG & FID & $S_{\mathrm{ZID}}$ & KID & Prec & Recall & $S_W$ & $D_{\mathrm{ZID}}$ & diagnosis ($\widehat p_D$) \\
\midrule
10 & 1 & 40.9 & 4.1e3 & 0.0149 & 0.652 & 0.805 & 4.1e3 & 41.1 & over (.002) \\
 & 2 & 20.2 & 3.2e3 & 0.0053 & 0.890 & 0.639 & 3.2e3 & -50.3 & under (.002) \\
 & 4 & 27.7 & 26e3 & 0.0121 & 0.814 & 0.251 & 26e3 & -100.2 & under (.002) \\
 & 8 & 38.4 & 69e3 & 0.0216 & 0.698 & 0.065 & 69e3 & -148.3 & under (.002) \\
\midrule
190 & 1 & 42.3 & 2.2e3 & 0.0128 & 0.698 & 0.745 & 2.2e3 & 18.4 & over (.002) \\
 & 2 & 35.4 & 6.8e3 & 0.0140 & 0.868 & 0.500 & 6.8e3 & -61.3 & under (.002) \\
 & 4 & 53.1 & 33e3 & 0.0320 & 0.814 & 0.179 & 33e3 & -83.8 & under (.002) \\
 & 8 & 59.3 & 60e3 & 0.0398 & 0.782 & 0.109 & 60e3 & -141.8 & under (.002) \\
\midrule
370 & 1 & 42.6 & 3.9e3 & 0.0153 & 0.624 & 0.773 & 3.9e3 & 30.8 & over (.002) \\
 & 2 & 37.4 & 8.2e3 & 0.0176 & 0.818 & 0.532 & 8.2e3 & -66.9 & under (.002) \\
 & 4 & 56.2 & 54e3 & 0.0361 & 0.868 & 0.195 & 54e3 & -175.7 & under (.002) \\
 & 8 & 64.0 & 81e3 & 0.0408 & 0.892 & 0.025 & 81e3 & -223.4 & under (.002) \\
\midrule
550 & 1 & 80.9 & 6.9e3 & 0.0374 & 0.498 & 0.739 & 6.9e3 & 39.2 & over (.002) \\
 & 2 & 55.2 & 4.7e3 & 0.0221 & 0.850 & 0.559 & 4.7e3 & -43.4 & under (.002) \\
 & 4 & 87.0 & 45e3 & 0.0622 & 0.938 & 0.285 & 45e3 & -150.0 & under (.002) \\
 & 8 & 98.8 & 65e3 & 0.0716 & 0.936 & 0.152 & 65e3 & -184.4 & under (.002) \\
\midrule
730 & 1 & 85.6 & 12e3 & 0.0375 & 0.514 & 0.732 & 12e3 & 30.6 & over (.002) \\
 & 2 & 55.9 & 2.5e3 & 0.0152 & 0.818 & 0.558 & 2.5e3 & -40.3 & under (.002) \\
 & 4 & 87.8 & 35e3 & 0.0577 & 0.810 & 0.268 & 35e3 & -124.1 & under (.002) \\
 & 8 & 121.2 & 83e3 & 0.0926 & 0.702 & 0.123 & 83e3 & -161.9 & under (.002) \\
\midrule
910 & 1 & 121.0 & 16e3 & 0.0467 & 0.410 & 0.731 & 16e3 & 44.7 & over (.002) \\
 & 2 & 76.4 & 0.6e3 & 0.0087 & 0.692 & 0.535 & 0.6e3 & -5.9 & under (.008) \\
 & 4 & 92.0 & 12e3 & 0.0322 & 0.812 & 0.342 & 12e3 & -71.5 & under (.002) \\
 & 8 & 121.3 & 35e3 & 0.0567 & 0.772 & 0.111 & 35e3 & -128.3 & under (.002) \\
\bottomrule
\end{tabular}
\end{table}

Figure~\ref{fig:sit-cfg} summarizes the \refrev{corresponding guidance transition} on \textbf{SiT-XL/2}
\citep{ma2024sit}, an interpolant transformer with a continuous flow objective. The sweep uses
the same six classes and $n{=}500$, with $\mathrm{CFG}\in\{1,2,4,8,16\}$. FID,
KID, and the \refrev{Frobenius covariance discrepancy} are minimized near $\mathrm{CFG}{=}2$ and increase on
both sides, but their scalar values do not encode which side is over- versus under-dispersed
(FID on class~$10$:
$41\!\to\!22\!\to\!34\!\to\!43\!\to\!49$). The \refrev{covariance-trace difference,
centroid-distance difference (PERMDISP), and Vendi Score difference} corroborate that dispersion changes across the sweep.
\refrev{For reference sample $X$ and generator sample $Y$, the covariance-trace difference is $\sum_k\{\widehat{\operatorname{Var}}(Y_k)-\widehat{\operatorname{Var}}(X_k)\}$, the Vendi Score difference is $V(Y)-V(X)$, and the centroid-distance difference is $\bar r_Y-\bar r_X$, where $\bar r_X=n^{-1}\sum_i\lVert X_i-\bar X\rVert$ and likewise for $Y$. The Frobenius covariance discrepancy is $\lVert\widehat\Sigma_Y-\widehat\Sigma_X\rVert_F$.}
ZID's final aggregate readout supplies the direction: on class~$10$, its
$D$-component magnitude $S_D$ grows from $42.3$ at CFG$=1$ to $176.6$ at CFG$=16$, while the
direction changes from over-dispersion to under-dispersion.

\begin{figure}[H]\centering
\includegraphics[width=\linewidth]{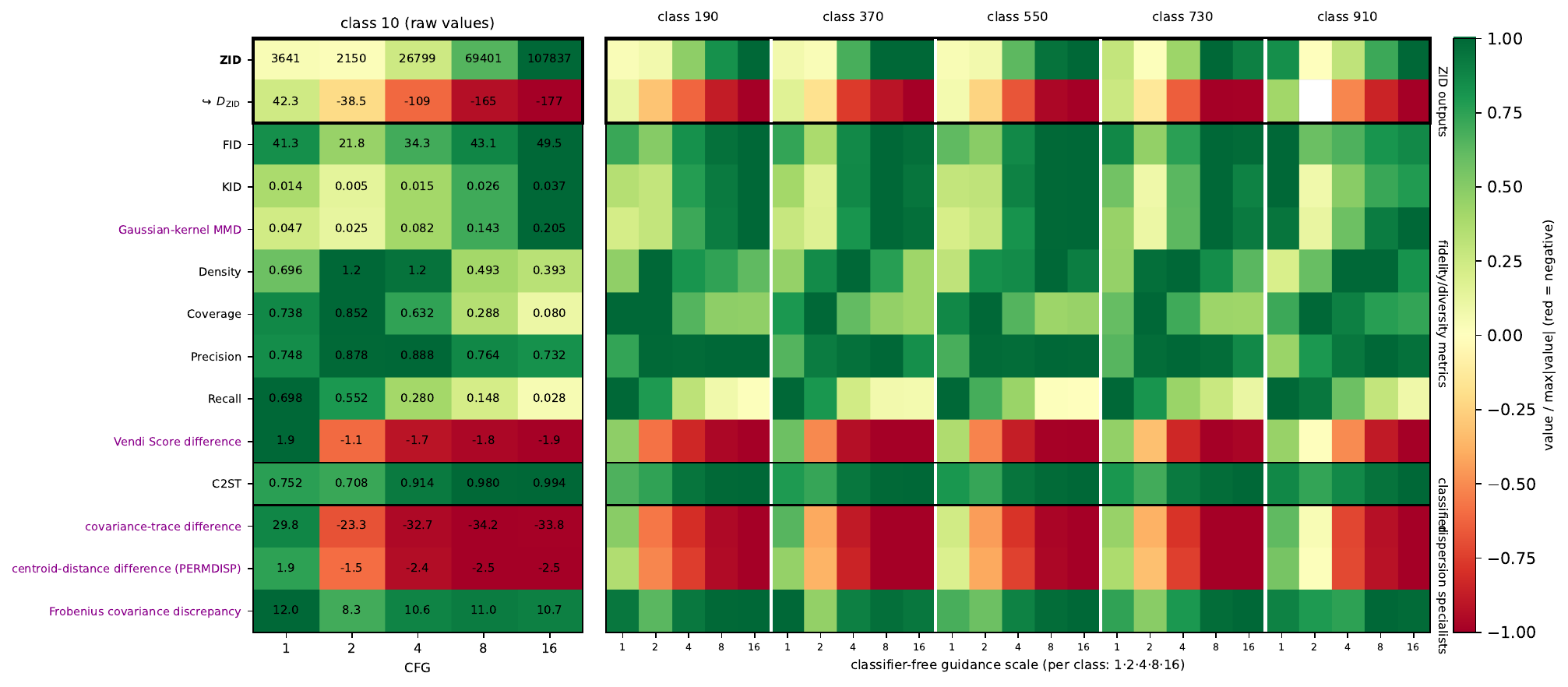}
\caption{\textbf{Classifier-free-guidance corroboration on SiT-XL/2.} Classifier-free-guidance
sweep vs.\ real ImageNet for six classes ($n{=}500$), using the \refrev{14 displayed quantities}. \textbf{Class~10 (left) is shown with its
raw values}; the other five classes are shown as normalized color cells. Color is each row scaled by its own
peak $|{\cdot}|$ within a class (diverging; green${>}0$, \textbf{red${<}0$}), so color shows each method's
response shape, not cross-row magnitudes (the raw values do). The boxed first two rows place ZID and
its signed dispersion readout $D_{\mathrm{ZID}}$ together; shared external methods below follow their relative order in
Fig.~\ref{fig:dom}. FID, KID, and the \refrev{Frobenius covariance discrepancy} attain their minima near $\mathrm{CFG}{=}2$.
The specialist rows corroborate a dispersion change, while $D_{\mathrm{ZID}}$
flips positive$\,\to\,$negative (the red band) and identifies the high-guidance side as
under-dispersed; the blank cell is the one case in which no direction is assigned.}
\label{fig:sit-cfg}
\end{figure}

\FloatBarrier

\section{Additional aggregation, ranking, and bandwidth controls}
\label{app:frozen-design-controls}

{
These controls retain the six-arm definition and its
$0.175\sigma_{\rm med}$ \refrev{GPK-small} member. Unless stated otherwise, they use the eight
controlled departures, CIFAR-Inception PCA-128 features, and $n=200$ observations per sample;
each table states its repetition and permutation budgets. The ranking controls use the full-range
sweeps from Sec.~\ref{sec:ranking}; the power controls use the departure definitions and signal
levels from Fig.~\ref{fig:dom}, with independent samples.

\paragraph{Aggregation rule.}
Table~\ref{tab:aggregation-control} compares five ways to aggregate the same six standardized arms.
For the power columns, every rule shares samples, arm matrices, and 199 outer relabelings within a
replicate. Flat Simes has the highest mean power and the strongest ranking summaries across the full severity ladders,
whereas Cauchy has a one-point higher minimum power estimate. The differences among flat Simes, max,
and Cauchy are descriptively small, while sum-of-squares and Fisher are weaker in the displayed power
cells. Taken together, these results place flat Simes in the leading group
across detection and ranking summaries, supporting its use as ZID's aggregation rule.

\begin{table}[!htbp]\centering\small
\caption{\textbf{Paired aggregation-rule comparison.} Mean and minimum
summaries are across the eight controlled departures. Power and null
rejection are evaluated at the $0.05$ significance level using 100 repetitions
and 199 outer permutations per departure; ranking uses 100 repetitions per
sweep level and per mild-versus-final pair.
All rules are evaluated on identical arms within each replicate.}
\label{tab:aggregation-control}
\setlength{\tabcolsep}{5pt}
\begin{tabular}{lccccccc}
\toprule
rule & \multicolumn{2}{c}{power} & null & \multicolumn{2}{c}{Spearman $\rho$} &
\multicolumn{2}{c}{paired accuracy} \\
\cmidrule(lr){2-3}\cmidrule(lr){5-6}\cmidrule(lr){7-8}
 & mean & min & mean & mean & min & mean & min \\
\midrule
flat Simes & .826 & .67 & .041 & .829 & .562 & .968 & .83 \\
max $|Z_k|$ & .818 & .66 & .042 & .828 & .556 & .966 & .82 \\
sum of squares & .784 & .60 & .048 & .822 & .543 & .958 & .80 \\
Fisher & .775 & .58 & .055 & .812 & .535 & .956 & .79 \\
Cauchy & .825 & .68 & .042 & .818 & .539 & .963 & .80 \\
\bottomrule
\end{tabular}
\end{table}

\paragraph{Expanded ranking across increasing severity.}
Table~\ref{tab:expanded-ranking} applies the same full-range ranking
protocol to the four standalone tests and five external comparators: FID, KID, multiscale MMD,
MIND, and ECS. \refrev{Multiscale MMD uses a fixed sum of Gaussian kernels spanning several scales.}
Relative to these external comparators, ZID ranks first on all four summaries: mean and
minimum Spearman correlations across the eight departures, and mean and minimum pairwise ordering accuracies. Among
these standalone tests, RISE ties ZID in mean pairwise ordering accuracy ($.969$ versus $.968$) and has
higher minimum pairwise ordering accuracy ($.90$ versus $.83$), whereas ZID has higher mean and
minimum Spearman correlations. This contrast separates monotonicity across the full severity ladders from ordering
the mild and final levels.
Density is not included in Table~\ref{tab:expanded-ranking} because converting its directional output into a
two-sided departure score requires an additional orientation choice.

\begin{table}[!htbp]\centering\small
\caption{\textbf{Expanded ranking across increasing severity.}
Spearman correlation uses the complete six-level sweep for each
departure family; paired accuracy compares
100 independent mild-versus-final pairs per
family. Mean and minimum
are across the same eight departure families. \refrev{MIND uses common random
projections within each paired replicate, and ECS uses the independently selected fixed specification.}
Rows are grouped as the complete ZID construction, the standalone
RISE, GPK-med, GPK-small, and GET tests, and external comparators.}
\label{tab:expanded-ranking}
\setlength{\tabcolsep}{7pt}
\begin{tabular}{lcccc}
\toprule
method & mean $\rho$ & min $\rho$ & mean paired accuracy & min paired accuracy \\
\midrule
ZID & .829 & .562 & .968 & .83 \\
\midrule
RISE & .808 & .494 & .969 & .90 \\
\refrev{GPK-med} & .411 & $-.082$ & .752 & .47 \\
\refrev{GPK-small} & .556 & .087 & .821 & .55 \\
GET & .679 & .199 & .906 & .63 \\
\midrule
FID & .266 & $-.406$ & .659 & .27 \\
KID & .378 & .010 & .741 & .48 \\
MMD$_{\rm ms}$ & .597 & .043 & .875 & .57 \\
MIND & .529 & $-.011$ & .839 & .51 \\
ECS & .380 & $-.030$ & .776 & .55 \\
\bottomrule
\end{tabular}
\end{table}

\paragraph{GPK-small bandwidth sensitivity.}
Within each replicate, Table~\ref{tab:bandwidth-control} holds samples and
\refrev{relabelings fixed and changes only the GPK-small multiplier} over seven prespecified
values from $.10$ to $.25$. Across this range, minimum power varies from $.70$ to $.71$ and mean null
rejection from $.035$ to $.048$. The neighboring $.15$ and $.20$ ratios differ from $.175$ by at most
$.05$ in power on every departure and have minimum paired decision
agreement of at least $.95$ under the alternative and $.98$ under the null. Wider changes produce larger
power differences or lower paired agreement on at least one departure. These quantities are descriptive
sensitivity summaries rather than a rule for selecting the default.

\begin{table}[!htbp]\centering\small

\caption{\textbf{Seven-point GPK-small bandwidth profile.} Only the \refrev{GPK-small} multiplier
changes; the other four arms, samples, and 199 outer relabelings are shared across ratios within each
of 100 repetitions per departure. Entries give the minimum power and mean null rejection rate across
departures at the $0.05$ significance level. The final three columns compare each ratio with $.175$:
the largest absolute departure-wise power difference and the minimum paired decision agreement under
the alternative and null.}
\label{tab:bandwidth-control}
\setlength{\tabcolsep}{3pt}
\begin{tabular}{lrrrrr}
\toprule
multiplier & min. power & mean null & max $|\Delta|$ & min. alt. agree & min. null agree \\
\midrule
$.100$ & .70 & .035 & .070 & .90 & .94 \\
$.125$ & .70 & .041 & .050 & .91 & .97 \\
$.150$ & .71 & .039 & .050 & .95 & .98 \\
$.175$ (reference) & .70 & .039 & .000 & 1.00 & 1.00 \\
$.200$ & .70 & .043 & .040 & .96 & .98 \\
$.225$ & .70 & .048 & .050 & .93 & .98 \\
$.250$ & .70 & .044 & .050 & .95 & .96 \\
\bottomrule
\end{tabular}
\end{table}
}
\FloatBarrier

\section{Detection across severity, dimension, and sample size}\label{app:robustness-details}

\subsection{Detection across increasing severity}\label{app:severity}
To examine how ZID's detection power changes as each departure strengthens,
we extend the single signal level used in Fig.~\ref{fig:dom} to a five-level ladder.
Each level uses 300 repetitions and 499 outer permutations per cell. Seven departures use multiples $\{0,.5,.75,1,1.25\}$ of their
Fig.~\ref{fig:dom} signal. For matched-moment multimodality, the ladder varies the fraction
$\lambda\in\{0,.25,.5,.75,1\}$ of initially real observations replaced by their counterparts from the same
moment-matched bimodal construction. This parameter ranges from the unmodified real sample at $\lambda=0$ to the
full bimodal alternative at $\lambda=1$.
Figure~\ref{fig:severity-suite} shows monotonically increasing empirical
power across all eight departures.

\begin{figure}[H]\centering
\includegraphics[width=.88\textwidth]{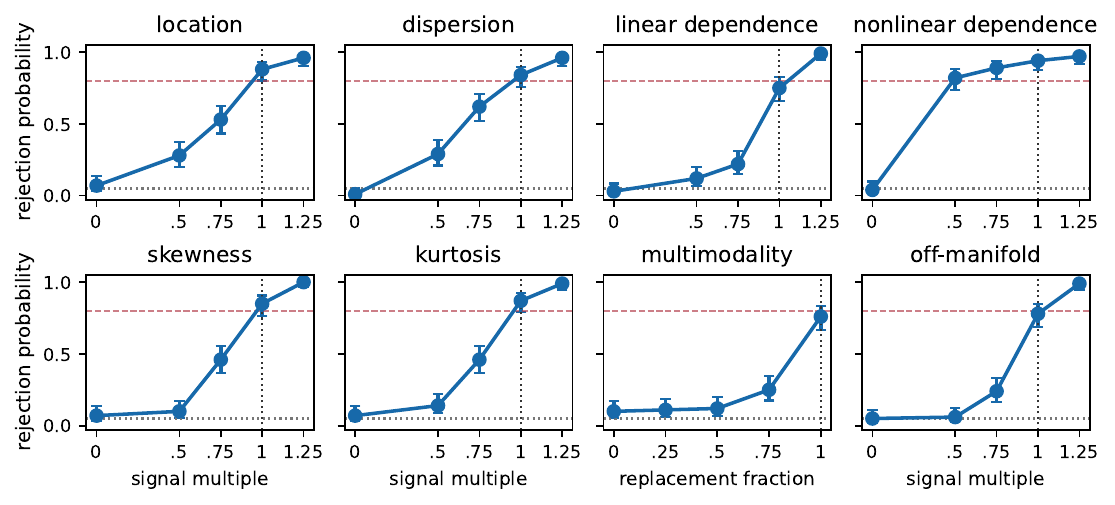}
\caption{\textbf{Detection across increasing severity.} Empirical rejection rates use 300 repetitions
and 499 outer permutations per cell at the $0.05$ significance level; bars are Wilson $95\%$ intervals.
Horizontal dotted and dashed lines mark $.05$ and $.80$, and the vertical dotted line marks the
Fig.~\ref{fig:dom} signal. For multimodality the horizontal coordinate is the fraction of initially real observations
replaced by their moment-matched bimodal counterparts; for the other seven departures it is a multiple
of the Fig.~\ref{fig:dom} signal.}
\label{fig:severity-suite}
\end{figure}

\subsection{Dimension--sample-size grid}\label{app:domgrid}
We compare methods across $d\in\{128,512,2048\}$ and $n\in\{50,200,600\}$ for the eight
controlled-departure constructions in App.~\ref{app:controlled-departures}.
\refrev{Figure~\ref{fig:domrobust} reports power for the aligned 17-row method panel. A panel-level
criterion selects one signal per cell to limit widespread ceiling effects, and that signal is held fixed
across methods; comparisons are therefore within cells rather than trends over $d$ or $n$.}

\begin{figure}[H]\centering
\includegraphics[page=1,width=0.94\linewidth]{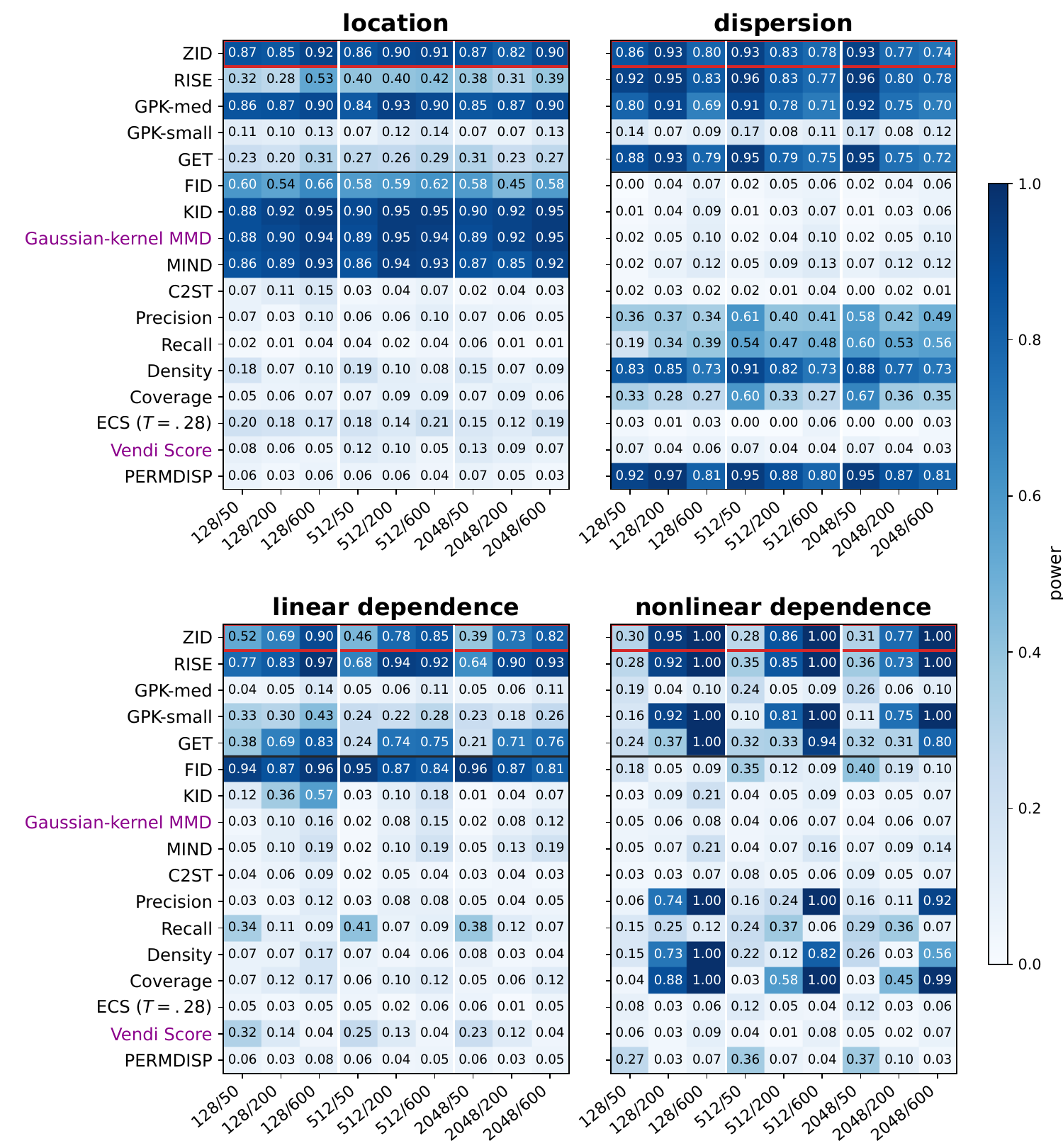}
\caption{\textbf{Detection across departures, sample sizes, and dimensions.}
The \refrev{17 rows} match Fig.~\ref{fig:dom}; columns are nine $(d,n)$ cells and panels are eight
controlled departures. Each cell uses CIFAR-10 Inception features projected to $d$ pooled principal
components, 100 repetitions at the $0.05$ significance level, and one signal shared by all methods;
permutation-calibrated rows use 99 relabelings. The boxed row is ZID; \refrev{GPK-small} uses
$0.175\sigma_{\rm med}$. Right panels share the method ordering shown at left.}
\label{fig:domrobust}
\end{figure}
\begin{figure}[p]\centering
\textit{Figure~\ref{fig:domrobust} continued.}\par\smallskip
\includegraphics[page=2,width=0.94\linewidth]{figs/dom_robust_heatmap_large.pdf}
\end{figure}
\FloatBarrier
\refrev{Table~\ref{tab:domnull} reports the maximum panel-specific null rejection estimate for each
method and $(d,n)$ cell.}
\begin{table}[H]\centering\small\setlength{\tabcolsep}{4pt}
\caption{\textbf{Null rejection across the aligned robustness grid.} The \refrev{17 rows} match Fig.~\ref{fig:domrobust}. For each method and $(d,n)$ cell, the table reports the maximum of the eight panel-specific null rejection-rate estimates, providing a worst-panel calibration summary. Each estimate uses real CIFAR-Inception features and 100 repetitions at the $0.05$ significance level; rejection is defined by $p\le.05$, and permutation-calibrated rows use 99 relabelings. The displayed values range from $.00$ to $.10$; the binomial s.e.\ of each underlying estimate is approximately $.022$. \refrev{GPK-small} uses $0.175\sigma_{\rm med}$.}
\label{tab:domnull}
\begin{tabular}{lccccccccc}
\toprule
method & \multicolumn{3}{c}{$d{=}128$} & \multicolumn{3}{c}{$d{=}512$} & \multicolumn{3}{c}{$d{=}2048$} \\
\cmidrule(lr){2-4}\cmidrule(lr){5-7}\cmidrule(lr){8-10}
 & $n{=}50$ & $n{=}200$ & $n{=}600$ & $n{=}50$ & $n{=}200$ & $n{=}600$ & $n{=}50$ & $n{=}200$ & $n{=}600$ \\
\midrule
ZID & 0.05 & 0.03 & 0.01 & 0.03 & 0.05 & 0.02 & 0.03 & 0.05 & 0.02 \\
RISE & 0.07 & 0.03 & 0.05 & 0.05 & 0.05 & 0.04 & 0.03 & 0.04 & 0.04 \\
\refrev{GPK-med} & 0.04 & 0.05 & 0.04 & 0.01 & 0.05 & 0.07 & 0.02 & 0.04 & 0.06 \\
\refrev{GPK-small} & 0.06 & 0.04 & 0.03 & 0.03 & 0.04 & 0.04 & 0.04 & 0.04 & 0.02 \\
GET & 0.05 & 0.05 & 0.05 & 0.03 & 0.05 & 0.04 & 0.08 & 0.06 & 0.06 \\
FID & 0.03 & 0.05 & 0.07 & 0.03 & 0.08 & 0.07 & 0.03 & 0.08 & 0.07 \\
KID & 0.02 & 0.04 & 0.05 & 0.05 & 0.04 & 0.05 & 0.04 & 0.04 & 0.05 \\
\refrev{Gaussian-kernel MMD} & 0.05 & 0.04 & 0.04 & 0.03 & 0.05 & 0.03 & 0.03 & 0.05 & 0.03 \\
MIND & 0.04 & 0.03 & 0.04 & 0.04 & 0.06 & 0.04 & 0.08 & 0.06 & 0.03 \\
C2ST & 0.01 & 0.05 & 0.04 & 0.01 & 0.07 & 0.10 & 0.01 & 0.00 & 0.05 \\
Precision & 0.03 & 0.04 & 0.06 & 0.03 & 0.02 & 0.05 & 0.02 & 0.02 & 0.05 \\
Recall & 0.00 & 0.01 & 0.05 & 0.03 & 0.03 & 0.04 & 0.02 & 0.04 & 0.03 \\
Density & 0.01 & 0.04 & 0.02 & 0.05 & 0.04 & 0.04 & 0.02 & 0.04 & 0.03 \\
Coverage & 0.03 & 0.02 & 0.05 & 0.05 & 0.03 & 0.03 & 0.01 & 0.03 & 0.01 \\
ECS ($T=.28$) & 0.04 & 0.02 & 0.03 & 0.05 & 0.07 & 0.05 & 0.04 & 0.07 & 0.07 \\
\refrev{Vendi Score} & 0.04 & 0.03 & 0.01 & 0.03 & 0.06 & 0.03 & 0.03 & 0.05 & 0.03 \\
PERMDISP & 0.07 & 0.02 & 0.06 & 0.06 & 0.05 & 0.07 & 0.06 & 0.05 & 0.06 \\
\bottomrule
\end{tabular}
\end{table}

\FloatBarrier

\end{document}